
\documentclass[letterpaper]{article} 
\usepackage{aaai23-r2hcai}  
\usepackage{amsmath,amsthm,amssymb,amsfonts}
\usepackage{times}  
\usepackage{helvet}  
\usepackage{courier}  
\usepackage[hyphens]{url}  
\usepackage{graphicx} 
\urlstyle{rm} 
\usepackage{natbib}  
\usepackage{caption} 
\frenchspacing  
\setlength{\pdfpagewidth}{8.5in}  
\setlength{\pdfpageheight}{11in}  
%
\usepackage{algorithm,algpseudocode}
\algblock{ParFor}{EndParFor}
\algnewcommand\algorithmicparfor{\textbf{Parallel for}}
\algnewcommand\algorithmicpardo{\textbf{do}}
\algnewcommand\algorithmicendparfor{\textbf{end\ for}}
 \algrenewtext{ParFor}[1]{\algorithmicparfor\ #1\ \algorithmicpardo}
\algrenewtext{EndParFor}{\algorithmicendparfor}

\newtheorem{thm}{Theorem}
\newtheorem{asp}[thm]{Assumption}
\newtheorem{prop}[thm]{Proposition}
\newtheorem{lem}[thm]{Lemma}
\newtheorem{cor}[thm]{Corollary}

\newtheorem{defi}[thm]{Definition}

\theoremstyle{remark}

\allowdisplaybreaks

\usepackage{subfig}
%
\usepackage{newfloat}
\usepackage{listings}
\DeclareCaptionStyle{ruled}{labelfont=normalfont,labelsep=colon,strut=off} 
\lstset{%
	basicstyle={\footnotesize\ttfamily},
	numbers=left,numberstyle=\footnotesize,xleftmargin=2em,
	aboveskip=0pt,belowskip=0pt,%
	showstringspaces=false,tabsize=2,breaklines=true}
\floatstyle{ruled}
\newfloat{listing}{tb}{lst}{}
\floatname{listing}{Listing}
%
\pdfinfo{
/TemplateVersion (2023.1)
}

\setcounter{secnumdepth}{0} 

%



\usepackage{fancyhdr}
\fancyhf{} 
\pagestyle{fancy} 
\fancyfoot[C]{\thepage}

\fancypagestyle{firstpagehf}
{
   \fancyhf{}
   \fancyhead[HC]{The AAAI 2023 Workshop on Representation Learning for Responsible Human-Centric AI (R$^2$HCAI)}
   \fancyfoot[C]{\thepage}
}

\title{Quantifying the Impact of Label Noise on Federated Learning}
\author{
    Shuqi Ke$^1$\quad
    Chao Huang$^2$\quad
    Xin Liu$^2$
}
\affiliations {
    \textsuperscript{\rm 1} The Chinese University of Hong Kong, Shenzhen \quad
    \textsuperscript{\rm 2} University of California, Davis\\
    shuqike@link.cuhk.edu.cn, fchhuang@ucdavis.edu, xinliu@ucdavis.edu
}

\usepackage{bibentry}

\begin{document}
\thispagestyle{firstpagehf}
\maketitle

\begin{abstract}
Federated Learning (FL) is a distributed machine learning paradigm where clients collaboratively train a model using their local (human-generated) datasets. While existing studies focus on FL algorithm development to tackle data heterogeneity across clients, the important issue of data quality (e.g., label noise) in FL is overlooked. This paper aims to fill this gap by providing a quantitative study on the impact of label noise on FL. We derive an upper bound for the generalization error that is \textit{linear} in the clients' label noise level. Then we conduct experiments on MNIST and CIFAR-10 datasets using various FL algorithms. Our empirical results show that the global model accuracy linearly decreases as the noise level increases, which is consistent with our theoretical analysis. We further find that label noise slows down the convergence of FL training, and the global model tends to overfit when the noise level is high.
\end{abstract}

\section{Introduction}

Federated Learning (FL) is a distributed machine learning paradigm where clients (e.g., distributed devices or organizations) collaboratively train a global model \cite{b19}. The local data of the clients
are often human-generated and have critical privacy concerns. An FL process consists of some communication rounds. In each round, each client trains its local model with its local data and then uploads the model updates to a central server \cite{b15}. The central server aggregates the local updates from clients and sends back an aggregated global model to all clients. After that, clients update their local models according to the information from the central server \cite{b20}. The client-server interaction stops when the global model converges. 

There has been an increasing volume of research studies on FL over the last few years \cite{b19,aFieldGuide,NUSFLsurvey,hkustFLsurvey}. Among these studies, a critical bottleneck, which without appropriate algorithmic treatment usually fails FL, is data heterogeneity (non-IID). For example, in a classification task, some clients may collect more data for class $A$ while others may collect more data for class $B$. Previous studies among this line focused on two categories of non-IID: attribute skew and label skew \cite{b21}. Attribute skew refers to the case where the feature distribution of each client is different from one another. For example, attribute skew could occur in a handwritten digit classification task as users may write the same digit with different font styles, sizes, and stroke widths \cite{b19}. Label skew refers to the case where the label distribution of each client is different from one another. Label skew, for example, could occur in an animal recognition task. Label distributions are different because clients are in different geo-regions and different animal habitats --- dolphins only live near coastal regions, or aquariums \cite{b19}.

While existing studies focus on tackling the non-IIDness, some implicitly assume that the data are clean, i.e., the data are correctly labeled. In practical applications, however, clients' datasets usually contain noisy labels \cite{b4}.  Label noise has been identified in many widely used FL datasets, including MNIST \cite{b3,b8}, EMNIST \cite{b6,b9}, CIFAR-10 \cite{b5,b6}, ImageNet \cite{b3,b7}, and Clothing1M \cite{b38}. The causes of label noise can be human error, subjective labeling tasks, non-exact data labeling processes, and malfunctioning data collection infrastructure \cite{b22,b30}. Moreover, in an FL setting, as clients collect and label local data in a distributed and private fashion, their labels are likely to be noisy and have different noise patterns \cite{b31}. For example, wearable devices can access various human-generated data, such as heart rate, sleep patterns, medication records, and mental health logs. Such data could contain different levels of label noise due to various sensor precision issues and human bias \cite{human_affective_wearable}.

Label noise is known to lessen model performance \cite{b22}. This paper focuses on the issue of label noise in FL, and we are particularly interested in answering the following two key questions:
\begin{itemize}
\item \textbf{Question 1}: \textit{How does label noise affect FL convergence?}
\item \textbf{Question 2}: \textit{How does label noise affect FL generalization?}
\end{itemize}
To answer Question 1, we conduct numerical experiments and show that the training loss converges slower with a higher noise level. To answer Question 2, we proceed from both theoretical and empirical perspectives. First, under minor assumptions, we prove that, for any distributed learning algorithm, the generalization error of the global model is linearly bounded above by a multiple of the system noise level. Then we conduct experiments using MNIST and CIFAR-10, showing that the results are consistent with the assumptions and theoretical results. We further show that the global model's accuracy decreases linearly in the clients' label noise level.

The key contributions of this paper are summarized below.
\begin{itemize}
\item To the best of our knowledge, this is the first quantitative study that analyzes the impact of label noise on FL. Our study bears practical significance for its use in different applications, e.g., incentive design \cite{b34}.
\item We provide a generic upper bound on the FL generalization error that applies to any FL algorithms. We further obtain a tighter upper bound considering the widely adopted ReLU networks in clients' local models.
\item We run experiments under various algorithms and different settings in FL. Our numerical results justify our theoretic assumption. We also observe that label noise linearly degrades FL performance by reducing the test accuracy of the global model.
\item Our study reveals several important and interesting insights. (1) Label noise slows down FL convergence; (2) label noise induces overfitting to the global model; (3) among three benchmark FL algorithms, SCAFFOLD \cite{b14} achieves the best test accuracy than other algorithms with minor label noise, while FedNova \cite{b37} achieves the best test accuracy with more extensive label noise.
\end{itemize}

\section{Related Work}

\subsection{Label noise}

Label noise has been an active topic in FL over the last few years. We classify the existing methods into three categories: (1) Some methods apply \textit{noise-tolerant loss functions} to achieve robust performance \cite{b51}.

\noindent
(2) Some methods \textit{distill confident training sample by selection or a weighting scheme} \cite{b30,b36,b39,b40,b41,b42,b43,b45,b46,b48,b49,b50}. Li et al. discovered that label noise might cause overfitting for FedAvg algorithm. However, they did not analytically characterize the hidden linear relation between noise level and the global model's performance.

\noindent
(3) Based on (2), some methods further \textit{correct noisy samples} \cite{b31,b44,b47,b52}. Tsouvalas et al. proposed FedLN that estimates per-client noise level and corrects noisy labels. However, their definition of label noise is limited because they only considered the engineering method to generate label noise as the definition of label noise. They considered a case where conditional distributions\footnote{In some work, the conditional distribution is also referred to as ``feature-to-label mapping''.} $\mathrm{Pr}(\mathrm{label}|\mathrm{feature})$ are the same across clients \cite{b19}. But in practice, the conditional distributions could be different for different clients. We provide a more general definition in this work and fill this gap. Xu et al. studied an FL scenario where different clients have different levels of label noise \cite{b31}. They introduced local intrinsic dimension (LID), a measure of the dimension of the data manifold. They discovered a strong linear relation between cumulative LID score and local noise level. However, their work did not provide either empirical observation or theoretical results on the relation between the global model's performance and local noise level. Moreover, there is no systematic study on how label noise affects FL in terms of convergence and generalization. We bridge this research gap in this work.

\subsection{Path-norm}

This work uses path-norm to measure the global model's generalization ability under label noise. People introduced different measures to explain the generalization ability of neural networks \cite{b27,b56}. Behnam Neyshabur et al. proposed path-norm as a capacity measure for ReLU networks \cite{b53,b54}. Empirical studies showed that path-norm positively correlates with generalization in all categories of hyper-parameter \cite{b56}.

The value of path-norm increases throughout the learning process. E et al. showed that the path-norm increases at most polynomially under centralized training \cite{b28}. In this work, we conduct the first formal study on the evolution of path-norm in FL. This is also the first work that analyzes the generalization ability of models in FL with path-norm proxy. We introduce path-norm proxy to the FL context because this proxy does not require unrealistic assumptions and allows us to characterize a large class of FL algorithms. For example, the assumptions on convexity, smoothness, etc., are no longer necessary in our analysis. Moreover, we have empirically verified our theory based on the definition of path-norm proxy.

\section{Preliminaries and Problem Statement}\label{sec:prelim}

\subsection{Federated Learning}

\begin{figure}
    \centering
    \includegraphics[width=6cm]{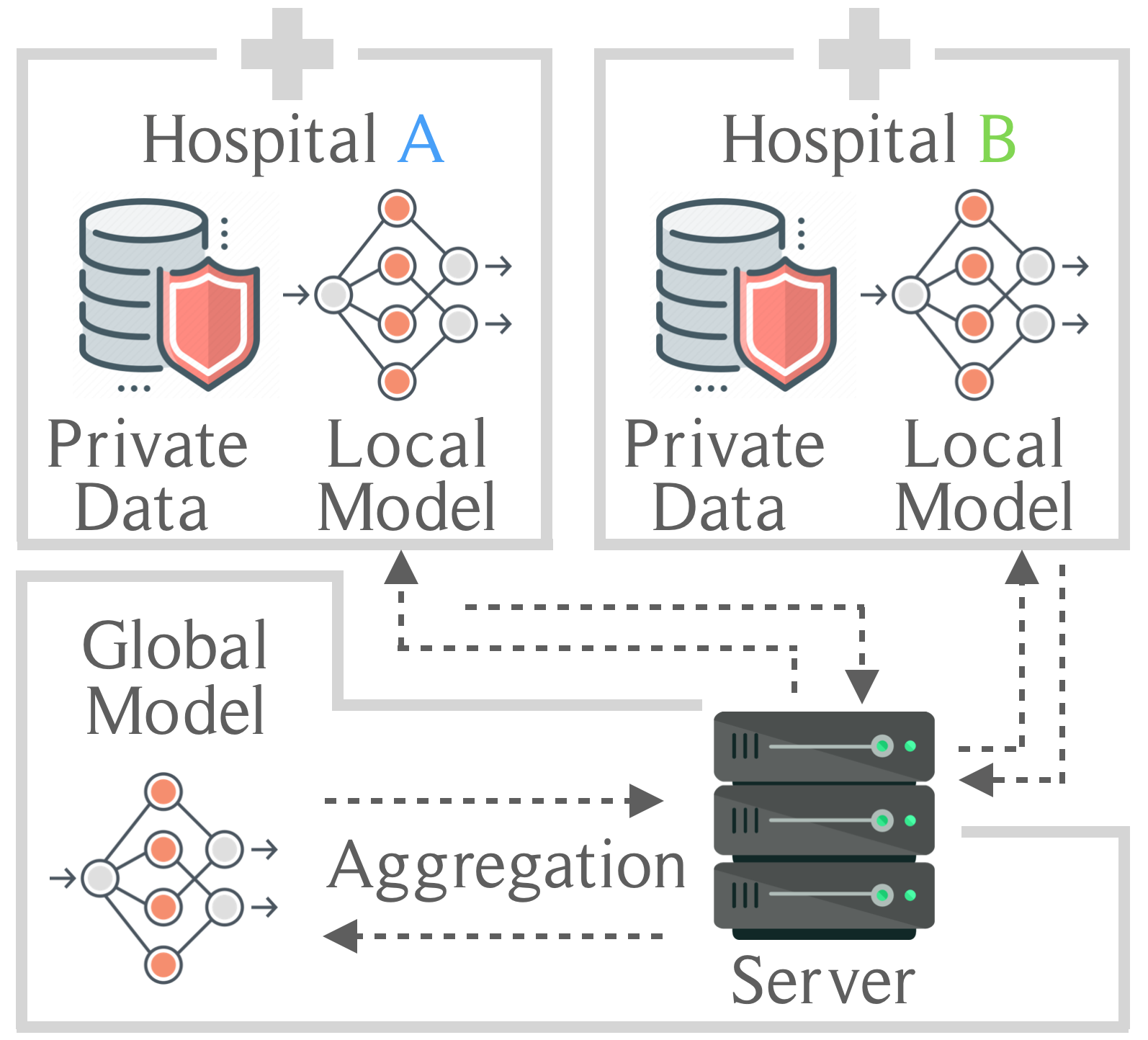}
    \caption{An example of Federated Learning application}
\end{figure}

In this subsection, we briefly introduce the problem formulation and algorithmic framework of FL.

Consider a typical FL task \cite{b19}, where $N$ clients collaboratively train a global model under the coordination of a central server through $R$ communication rounds. FL aims to solve a distributed optimization problem with distributed data. Here we first introduce the objective of the distributed optimization problem and then define the relevant notations. The objective is
\begin{equation}
    \min_{W\in\mathcal{W}}\frac{1}{N}\sum_{k=1}^{N}\left[\frac{1}{n_k}\sum_{i=1}^{n_k}\ell(f(x_{k,i};W),y_{k,i})\right]
\end{equation}
where we define
\begin{itemize}
    \item Hypothesis space: $\mathcal{W}\subset\mathbb{R}^{d_w}$ denotes the hypothesis space of all feasible parameters of learning models, and $d_w\in\mathbb{N}$ is the dimension of the hypothesis space.
    \item Local data: Each client has a local dataset $S_{k}$. We assume that in the $k$-th dataset $S_{k}$, each data point is drawn from a distribution $\pi_{k}$ over $\mathcal{S}\subset\mathbb{R}^{d_x+d_y}$ where $d_x$ denotes the dimension of feature space and $d_y$ denotes the dimension of the label space. A data point $(x,y)\in\mathbb{R}^{d_x+d_y}$ is a real-valued vector where $x\in\mathbb{R}^{d_x}$ denotes its feature and $y\in\mathbb{R}^{d_y}$ denotes its label. There are in total $n_{k}$ data points in client $k$'s local dataset
    \[
        S_{k}=\{(x_{k,1},y_{k,1}),(x_{k,2},y_{k,2}),\dots,(x_{k,n_{k}},y_{k,n_{k}})\}
    \]
    Let $\mu_k$ denote the ground truth distribution (i.e., clean labels) and $\pi_k$ denote client $k$'s  possibly noisy data distribution. There exists label noise in the local dataset of client $k$ if there exists $x\in\mathbb{R}^{d_x},y\in\mathbb{R}^{d_y}$ such that
    \begin{equation}
        \mathrm{Pr}_{\mu_k}(y|x)\not=\mathrm{Pr}_{\pi_k}(y|x)
    \end{equation}
    where $\mathrm{Pr}$ represents a probability mass/density function with a given distribution and an event. One can consider the data points sampled from $\pi_k$ as training data and those sampled from $\mu_k$ as test data.
    \item Global parameter and local parameter: We denote the global model's parameter as a real-valued vector $W\in\mathcal{W}$. Each client has a local model with parameter $w_k\in\mathcal{W}$.
    \item Meta model: We define the meta model $f:\mathbb{R}^{d_x}\times\mathcal{W}\rightarrow\mathbb{R}^{d_y}$ as a function that maps the data feature and model parameter to an estimated label. For example, a meta model could be a neural network with variable parameters. We obtain a model by substituting the variable parameters with real number values.
    \item Loss function: We denote the loss function as
    \[
        \ell:\mathbb{R}^{d_y}\times\mathbb{R}^{d_y}\rightarrow\mathbb{R}_{\ge 0}
    \]
    For example, a squared loss function is defined as $\ell:(y,\hat{y})\mapsto\lVert y-\hat{y}\rVert^2$.
\end{itemize}
 
In each communication round, a client trains its local model for $E$ epochs to minimize the local training loss $\frac{1}{n_k}\sum_{i=1}^{n_k}\ell(f(x_{k,i};W),y_{k,i})$ over its local dataset $S_k$. After local model training, the clients upload their local model parameters $w_k$ to a central server. The central server aggregates the uploaded parameters and updates the global model's parameter $W$. After that, the central server sends the global model's new parameter back to each client. We provide a general FL framework in Algorithm~\ref{alg:fed-frame}.

\begin{algorithm}
    \caption{A General FL Framework}
    \label{alg:fed-frame}
    \begin{algorithmic}[1]
        \renewcommand{\algorithmicrequire}{\textbf{Initialization:}}
        \renewcommand{\algorithmicensure}{\textbf{Output:}}
        \Require Local datasets $\{S_{1},S_{2},\dots,S_{N}\}$, aggregation function $\phi$
        \Ensure  Global model parameter vector $W$ and local model parameter vectors $\{w_{1},w_{2},\dots,w_{N}\}$ after the $R$-th communication round
        \For{t $\leftarrow 1$ to $R$}
            \ParFor{k $\leftarrow 1$ to $N$}
                \For{i $\leftarrow 1$ to $E$}\algorithmiccomment{local training}
                    \State Update local model parameter $w_k$
                \EndFor
                \State Send $w_k$ to the central server
            \EndParFor
            \State $W\leftarrow\phi(w_1,\dots,w_N,W)$\algorithmiccomment{aggregation}
            \For{k $\leftarrow 1$ to $N$}\algorithmiccomment{broadcast}
                \State Send $W$ to client $k$
                \State Update local model parameter $w_k$ according to $W$
            \EndFor
        \EndFor
    \end{algorithmic}
\end{algorithm}

Different FL algorithms use different aggregation mechanisms. We use FedAvg as an example to explain the aggregation step in Algorithm~\ref{alg:fed-frame}. In FedAvg, the aggregation is defined as
\begin{equation}
    \phi:(w_1,\dots,w_N,W)\mapsto(1-\eta_{\mathrm{gl}})W+\eta_{\mathrm{gl}}\frac{\sum_{k=1}^{N}w_k}{N}
\end{equation}
where $\eta_{\mathrm{gl}}$ denotes the global learning rate. Note that in a realistic setting, there could be limitations on computation and communication, including computational efficiency, communication bandwidth, and network robustness \cite{FLframework_nips2020}. For example, some clients may fail to communicate with the central server due to network issues. Therefore, the server only samples a subset of available clients. Since we focus on data noise, we ignore these realistic considerations and assume that all clients participate in all communication rounds.

\subsection{Model performance}

This subsection introduces the theoretical tools to measure a learning algorithm's performance. Here we inherit most notations from the last part with some revisions. We consider fixed data points for a FL process in the previous part. But in this part, we consider each data point and each local dataset $S_k$ as random variables to investigate the generalization performance of an algorithm given an arbitrary training dataset. The pair $(x,y)$ in lowercase represents a deterministic data point, and the pair $(X,Y)$ in uppercase represents pair of random variables. We re-write a local dataset $S_k$ as
\[
    S_{k}=\{(X_{k,1},Y_{k,1}),(X_{k,2},Y_{k,2}),\dots,(X_{k,n_{k}},Y_{k,n_{k}})\}
\]
where $(X_{k,i},Y_{k,i})\sim\pi_k$. We define the empirical risk $L:\mathcal{W}\rightarrow\mathbb{R}_{\ge 0}$ of the global model as
\begin{align}
    L(W)=&\sum_{k=1}^{N}\frac{n_k}{n}\mathbb{E}_{\pi_k}\left[\ell(f(X;W),Y)\right]
\end{align}
where $n:=\sum_{k=1}^{N}n_k$ and $W$ denotes the parameter of the global model. Given the ground truth distribution $\mu_k$ of each client, we further define the ground-truth risk $L^{\dag}:\mathcal{W}\rightarrow\mathbb{R}_{\ge 0}$ of the global model as
\begin{align}
    L^{\dag}(W)=&\sum_{k=1}^{N}\frac{n_k}{n}\mathbb{E}_{\mu_k}\left[\ell(f(X;W),Y)\right]
\end{align}
Then we define the generalization error of the global model as \cite{b23}
\begin{equation}
    G(W):=\left|L^{\dag}(W)-L(W)\right|
\end{equation}

\subsection{Path-norm proxy}\label{subsec:relu-network}

\begin{figure}[h]
    \centerline{\includegraphics*[width=7cm]{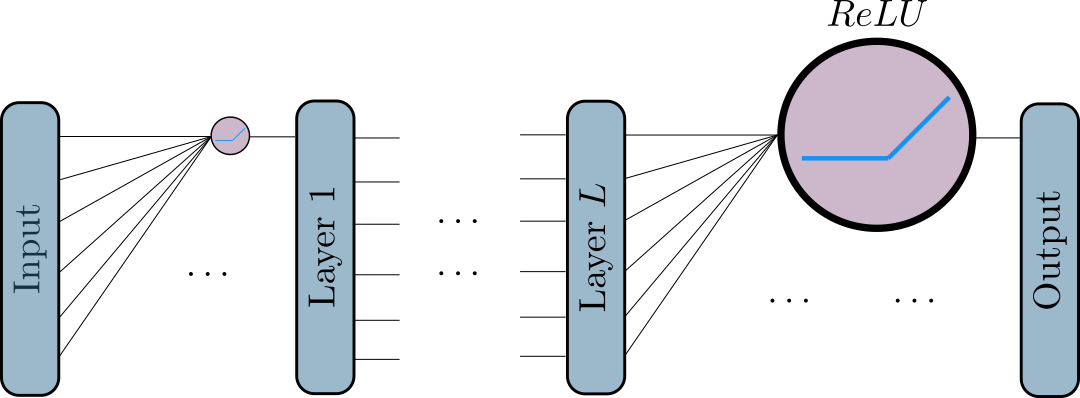}}
    \caption{An $L$-layer ReLU network.}
    \label{fig:relu-vis}
\end{figure}

This paper uses ReLU network (Figure~\ref{fig:relu-vis}) and path-norm proxy (Figure~\ref{fig:path-norm-vis}) for a case study of the generalization error. The authors in \cite{b28} provided a mathematical description of ReLU networks. Based on their definitions, we define the path-norm proxy below.

\begin{defi}[Path-norm proxy {\cite{b28}}]
    The path-norm proxy of an $L$-layer ReLU network is defined as
    \begin{equation}
        \lVert f(\cdot;\theta)\rVert_{\mathrm{pnp}}=\sum_{(i_0,\dots,i_{L+1})}\prod_{l=0}^{L}\left|\theta_l(i_{l},i_{l+1})\right|
    \end{equation}
    where $\theta$ denotes the parameter vector of the ReLU network; $\theta_l(i_{l},i_{l+1})$ refers to the weight of the edge connecting the $i_l$-th node in layer $l$ and the $i_{l+1}$-th node in layer $l+1$.
\end{defi}

\begin{figure}
    \centering
    \includegraphics[width=7cm]{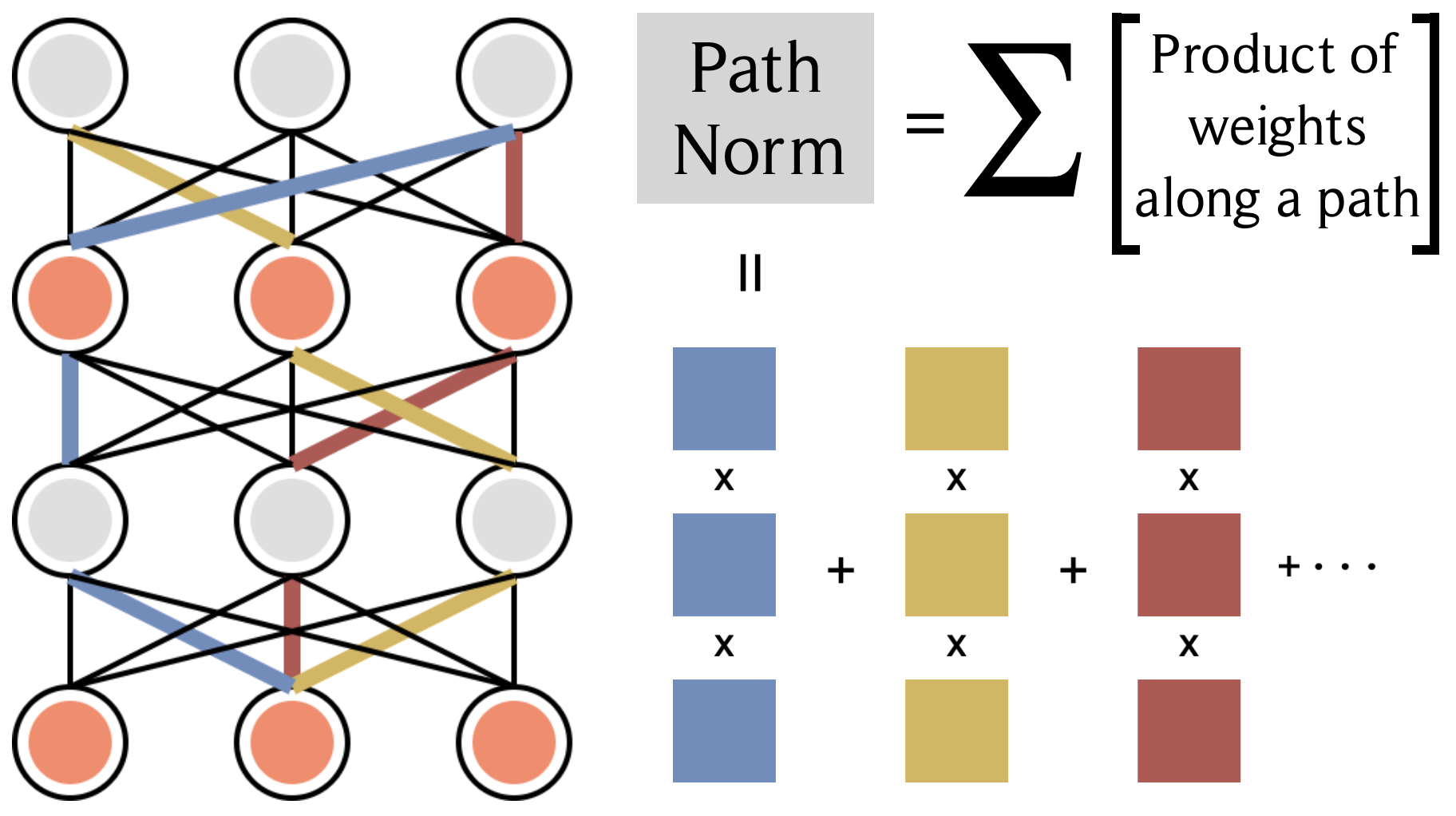}
    \caption{Path-norm proxy}
    \label{fig:path-norm-vis}
\end{figure}

The authors in \cite{b28} also proved that the path norm proxy controls the generalization error in a centralized learning setting. Next, we will show that the path norm proxy controls the generalization error in FL.

\section{Theoretical Results}\label{sec:theory-result}

In this section, we provide a theoretical analysis of the generalization error of the global model in FL. In particular, we give proof of the upper bound of the global model's generalization error.

In practical FL applications, local data distributions are complicated as we cannot explicitly find the distribution functions. To simplify our theoretical analysis, we make the following assumption:

\begin{asp}[Simplified label noise condition]\label{asp:annotation-skew}
    For any client $i$ and client $j$, we assume
    \begin{equation}
        \forall (x,y)\in\mathbb{R}^{d_x+d_y},\mathrm{Pr}(x;\pi_i)=\mathrm{Pr}(x;\pi_j)
    \end{equation}
\end{asp}
\noindent
This assumption assures the feature distributions to be identical for all clients, which is a standard setting in studies about concept drift \cite{2022FL_concept_drift_nips}. Although it is difficult to show that Assumption~\ref{asp:annotation-skew} holds in our experiment settings, the numerical results are still consistent with our theoretical results.

We first provide a general result on the upper bound of generalization error in Theorem~\ref{thm:generalization-error-bound}. Then we extend this general bound by studying some specific cases with more assumptions in Corollary~\ref{cor:explicit-generror-bound}.

\begin{thm}[Bound the evolution of generalization error]\label{thm:generalization-error-bound}
    Consider any Federated Learning algorithm with a neural network with an arbitrary structure for a classification task of $C$ classes under label noise and use the cross-entropy function for loss computation, then under Assumption~\ref{asp:annotation-skew}
    \begin{equation}
        \resizebox{\hsize}{!}{$G(W)\le \Omega \cdot\mathbb{E}_{X}\left[\sum_{i=1}^{C}\sum_{k=1}^{N}\frac{n_k}{n}\left|\mathrm{Pr}_{\mu}(Y=i|X)-\mathrm{Pr}_{\pi_k}(Y=i|X)\right|\right]$}
    \end{equation}
    where $\Omega$ is the upper bound of $f$.
\end{thm}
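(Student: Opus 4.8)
The plan is to reduce the generalization error to a per-client, per-feature comparison of the clean and noisy conditional label distributions, and then absorb the loss into a uniform constant. For a distribution $\nu$ over client $k$'s data, write $\mathcal{E}_k(\nu):=\mathbb{E}_{\nu}[\ell(f(X;W),Y)]$. Directly from the definitions of $L$, $L^{\dag}$ and $G$, we have $G(W)=\big|\sum_{k=1}^{N}\tfrac{n_k}{n}\big(\mathcal{E}_k(\mu_k)-\mathcal{E}_k(\pi_k)\big)\big|$, and the triangle inequality gives $G(W)\le\sum_{k=1}^{N}\tfrac{n_k}{n}\big|\mathcal{E}_k(\mu_k)-\mathcal{E}_k(\pi_k)\big|$, so it suffices to control each client's term separately.

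Next, for a fixed client I would condition on the feature $X$. Since the task is classification into $C$ classes, $\mathcal{E}_k(\pi_k)=\mathbb{E}_{X}\big[\sum_{i=1}^{C}\mathrm{Pr}_{\pi_k}(Y{=}i\mid X)\,\ell(f(X;W),i)\big]$, and likewise for $\mu_k$. The crucial use of Assumption~\ref{asp:annotation-skew} is that the feature marginal is the same for every $\pi_k$ (and, under the standard label-noise model, for the corresponding clean law as well), so the outer expectation over $X$ is common to $\mu_k$ and $\pi_k$ and the two expectations can be subtracted under a single $\mathbb{E}_X$:
\[
\mathcal{E}_k(\mu_k)-\mathcal{E}_k(\pi_k)=\mathbb{E}_{X}\!\left[\sum_{i=1}^{C}\big(\mathrm{Pr}_{\mu}(Y{=}i\mid X)-\mathrm{Pr}_{\pi_k}(Y{=}i\mid X)\big)\,\ell(f(X;W),i)\right].
\]
(I read the $\mathrm{Pr}_{\mu}$ in the statement as the common ground-truth feature-to-label map; the identity is the same with a client-specific $\mu_k$ if one prefers.)

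Finally I would move the absolute value inside $\mathbb{E}_X$ and the sum over $i$, use $\ell\ge 0$ together with the uniform bound $\ell(f(X;W),i)\le\Omega$, and swap the two finite sums. Combining with the client-wise triangle inequality from the first step yields
\[
G(W)\le\Omega\cdot\mathbb{E}_{X}\!\left[\sum_{i=1}^{C}\sum_{k=1}^{N}\tfrac{n_k}{n}\,\big|\mathrm{Pr}_{\mu}(Y{=}i\mid X)-\mathrm{Pr}_{\pi_k}(Y{=}i\mid X)\big|\right],
\]
which is exactly the claimed bound.

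The step I expect to be the main obstacle is justifying the uniform bound $\ell(f(X;W),i)\le\Omega$ for the cross-entropy loss and reconciling it with the phrase ``$\Omega$ is the upper bound of $f$''. If $f$ denotes pre-softmax logits bounded in absolute value by $\Omega$, the induced cross-entropy is bounded only by something like $2\Omega+\log C$, not $\Omega$; so one must instead treat $\Omega$ as an a priori bound on the composed loss $\ell(f(\cdot;W),\cdot)$, or let $f$ output a probability vector whose relevant coordinate stays above the level $e^{-\Omega}$. A secondary point to verify is that Assumption~\ref{asp:annotation-skew}, stated only for the noisy laws $\pi_k$, is genuinely inherited by the clean laws $\mu_k$ (automatic when the corruption acts only on $Y\mid X$), since the single-$\mathbb{E}_X$ subtraction in the second step relies on it.
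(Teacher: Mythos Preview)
Your decomposition is essentially the paper's: split by client, condition on $X$, use Assumption~\ref{asp:annotation-skew} to share a single $\mathbb{E}_X$, then bound termwise. The gap is precisely the point you flag as the ``main obstacle,'' and the paper's resolution is the one idea you are missing. You try to bound $\ell(f(X;W),i)\le\Omega$, which indeed does not match ``$\Omega$ is the upper bound of $f$.'' The fix is not to reinterpret $\Omega$ but to exploit the specific form of cross-entropy (this is why the hypothesis singles out that loss, and why the paper invokes Lemma~\ref{lem:cross-entropy-expand}).

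Concretely, for cross-entropy one has $\ell(f(X;W),i)=-f_i(X)+\log\sum_{r=1}^{C}\exp(f_r(X))$. Plugging this into your identity,
\[
\sum_{i=1}^{C}\bigl(\mathrm{Pr}_{\mu}(Y{=}i\mid X)-\mathrm{Pr}_{\pi_k}(Y{=}i\mid X)\bigr)\,\ell(f(X;W),i),
\]
the log-normalizer $\log\sum_r\exp(f_r(X))$ does not depend on $i$, so its contribution is that constant times $\sum_{i}\bigl(\mathrm{Pr}_{\mu}-\mathrm{Pr}_{\pi_k}\bigr)=1-1=0$. What survives is $-\sum_{i}\bigl(\mathrm{Pr}_{\mu}-\mathrm{Pr}_{\pi_k}\bigr)f_i(X)$, and now the bound $|f_i(X)|\le\Omega$ applies directly, yielding the claimed inequality without any $\log C$ or $2\Omega$ loss. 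Once you insert this cancellation between your second and third steps, your argument coincides with the paper's proof.
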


\noindent
\textbf{Interpretation of Theorem~\ref{thm:generalization-error-bound}:} This theorem implies that the generalization error of global model in FL is linearly bounded by the degree of label noise in the distributed system. The theorem quantitatively characterizes the impact of label noise. This linear bound is also consistent with our empirical findings. When $N=1$, this linear bound applies to centralized learning.

We can interpret the expectation term in the upper bound with an example. In this example, we set $N=2$, i.e., two clients. The input space consists of $25$ discrete grid points and two classes. Client $2$'s local data distribution is identical to the ground truth. Client $1$ has label noise in its local data where three circled data points in class $A$ are mislabelled as class $B$.

\begin{figure}[h]
    \centering
    \subfloat[Data distribution of client 1\label{subfig:labelnoise-ex-client1}]{%
       \includegraphics[width=0.5\linewidth]{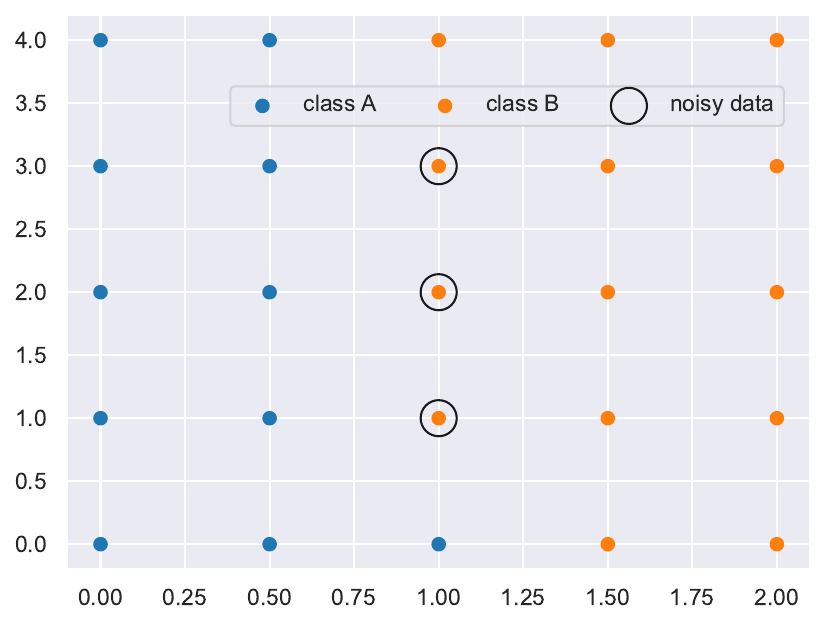}}
  \subfloat[Data distribution of client 2\label{subfig:labelnoise-ex-client2}]{%
        \includegraphics[width=0.5\linewidth]{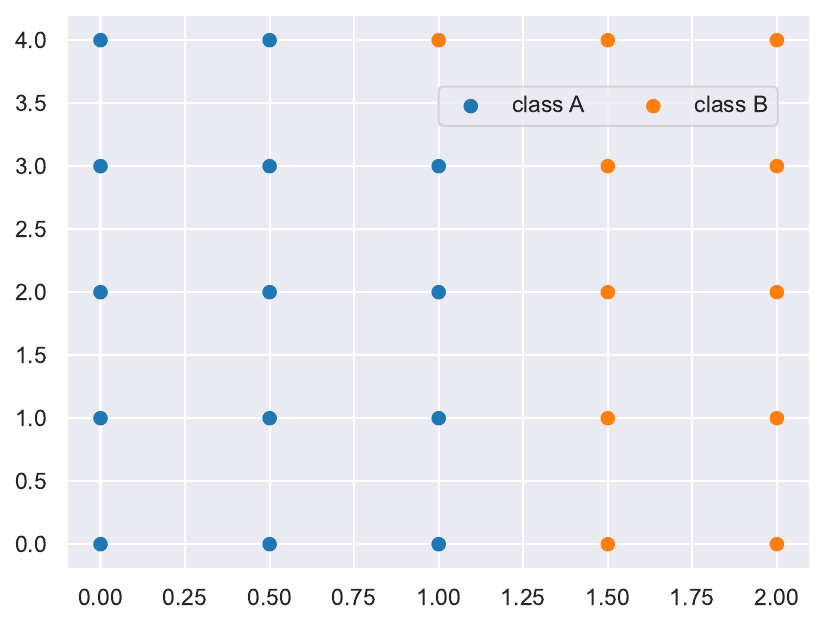}}
  \caption{An example of label noise.}
    \label{fig:label-noise-example}
\end{figure}
\noindent
If the two clients has the same number of data samples, i.e., $n_1=n_2$, then
\begin{equation}
    \begin{aligned}
        &\mathbb{E}_{X}\left[\sum_{i=1}^{C}\sum_{k=1}^{N}\frac{n_k}{n}\left|\mathrm{Pr}_{\mu}(Y=i|X)-\mathrm{Pr}_{\pi_k}(Y=i|X)\right|\right]\\
        &=\mathbb{E}_{X}\left[\sum_{i=1}^{C}\frac{1}{2}\left|\mathrm{Pr}_{\mu}(Y=i|X)-\mathrm{Pr}_{\pi_1}(Y=i|X)\right|\right]\\
        &=\frac{1}{2}\left(\frac{1}{5}\cdot\left|\frac{1}{5}-\frac{4}{5}\right|+\frac{1}{5}\left|\frac{4}{5}-\frac{1}{5}\right|\right)=\frac{3}{25}
    \end{aligned}
\end{equation}
This expectation represents the expected \textbf{percentage of noisy data points} in a dataset, e.g. there are in total $25$ grid points and $3$ noisy data points in Fig~\ref{subfig:labelnoise-ex-client1}.

Now we consider a slightly different example where client $2$ also has label noise as shown in Figure~\ref{fig:label-noise-example2}. Then the expectation is $\frac{4}{25}$.
\begin{figure}[h]
    \centering
    \subfloat[Data distribution of client 1\label{subfig:labelnoise-ex2-client1}]{%
       \includegraphics[width=0.5\linewidth]{img/labelnoiseClient1.pdf}}
  \subfloat[Data distribution of client 2\label{subfig:labelnoise-ex2-client2}]{%
        \includegraphics[width=0.5\linewidth]{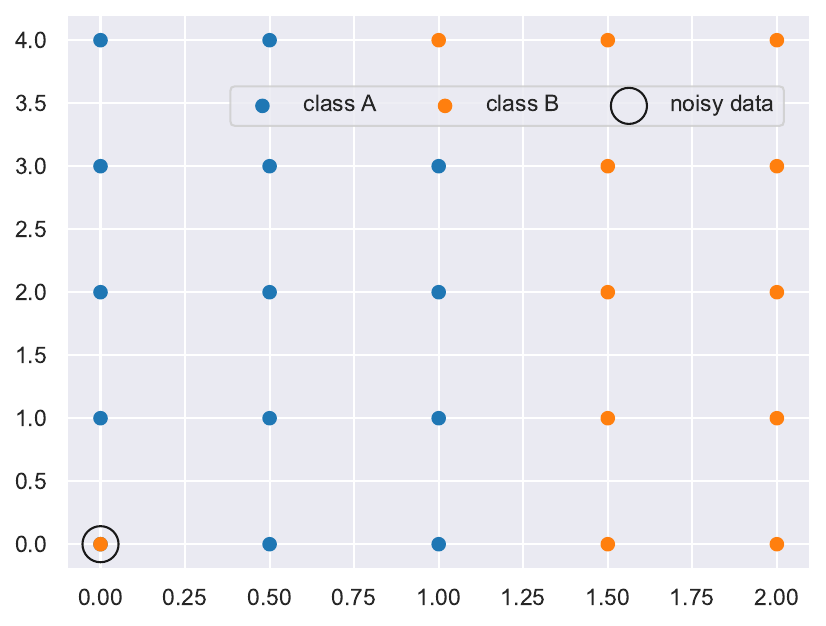}}
  \caption{An example of label noise.}
    \label{fig:label-noise-example2}
\end{figure}

Before we prove Theorem~\ref{thm:generalization-error-bound}, we need a lemma on cross-entropy.

\begin{lem}\label{lem:cross-entropy-expand}
    Consider a classification problem of $C$ classes. Given a data distribution $\pi$ such that $(x,y)\sim\pi,y\in[1:C]$, a neural network $f$ and a probability measure $\mathrm{Pr}$, then the expectation of cross-entropy loss is
    \begin{equation}
        -\sum_{i=1}^{C}\mathrm{Pr}_{\pi}(Y=i)\mathbb{E}_{X|Y=i}\left[f_i(X)-\log\left(\sum_{r=1}^{C}\exp(f_r(X))\right)\right]
    \end{equation}
\end{lem}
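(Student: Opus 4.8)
The plan is to unfold the definition of the (softmax) cross-entropy loss in terms of the network's logit outputs and then apply the tower property of conditional expectation, conditioning on the label $Y$. Concretely, for a sample $(x,y)$ with integer label $y\in[1:C]$, the cross-entropy loss of the network $f$ with logits $f_1(x),\dots,f_C(x)$ is
\[
\ell(f(x),y) = -\log\frac{\exp(f_y(x))}{\sum_{r=1}^{C}\exp(f_r(x))} = -\left(f_y(x) - \log\sum_{r=1}^{C}\exp(f_r(x))\right).
\]
First I would record this identity, noting that it holds for every realization of $(x,y)$.

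Next I would take the expectation over $(X,Y)\sim\pi$ and split it according to the value of $Y$. Since $\{Y=i\}_{i=1}^{C}$ partitions the sample space, the law of total expectation gives
\[
\mathbb{E}_{\pi}[\ell(f(X),Y)] = \sum_{i=1}^{C}\mathrm{Pr}_{\pi}(Y=i)\,\mathbb{E}_{X|Y=i}\bigl[\ell(f(X),i)\bigr].
\]
Substituting the logit expression from the first step and pulling the minus sign outside both sums yields exactly the claimed formula
\[
-\sum_{i=1}^{C}\mathrm{Pr}_{\pi}(Y=i)\,\mathbb{E}_{X|Y=i}\left[f_i(X) - \log\left(\sum_{r=1}^{C}\exp(f_r(X))\right)\right].
\]

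The computation is elementary, so there is no serious obstacle; the only points that need care are (i) fixing the convention that $\ell$ in the lemma is the softmax cross-entropy applied to the raw logits $f(x)$ rather than to an already-normalized probability vector, and (ii) ensuring the conditional expectations $\mathbb{E}_{X|Y=i}$ are well defined --- classes with $\mathrm{Pr}_{\pi}(Y=i)=0$ contribute nothing and can be dropped from the sum. I would also remark that the grouping of $f_i(X)$ with $-\log\sum_{r}\exp(f_r(X))$ inside the bracket is deliberate: it is the per-class log-softmax, and keeping it in this form is what makes the subsequent comparison of this quantity across $\pi_k$ and $\mu$ in the proof of Theorem~\ref{thm:generalization-error-bound} transparent.
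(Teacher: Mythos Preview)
Your proposal is correct. The paper does not actually supply a proof of this lemma; it is stated and then immediately invoked in the proof of Theorem~\ref{thm:generalization-error-bound}, with the derivation left implicit as an elementary consequence of Definition~\ref{def:cross-entropy} (the softmax cross-entropy) in the appendix. Your two-step argument --- write $\ell(f(x),y)=-\bigl(f_y(x)-\log\sum_r e^{f_r(x)}\bigr)$ and then condition on $Y$ via the law of total expectation --- is precisely the computation the paper has in mind, and there is really no alternative route for a statement this short.
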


In most machine learning tasks, it is reasonable to assume that the input and output of the model are bounded, which we formalize in Assumptions~\ref{asp:bdd-x} and ~\ref{asp:bdd-output}.

\begin{asp}[Bounded input space]\label{asp:bdd-x}
    The input space $\mathcal{X}$ is bounded in $[0,1]^{d_x}\subset\mathbb{R}^{d_x}$.
\end{asp}

\begin{asp}[Bounded model output]\label{asp:bdd-output}
    Consider a neural network $f:\mathbb{R}^{d_x}\times\mathcal{W}\rightarrow\mathbb{R}^{d_y}$. We assume that its range $f(\mathbb{R}^{d_x};\mathcal{W})$ is bounded in $\mathbb{R}^{d_y}$. That is, $\exists C_f\ge 0$ such that $\forall x\in\mathbb{R}^{d_x},\forall\theta\in\mathcal{W},\forall i\in\{1,\dots,C\},C_f\ge|f_i(x;\theta)|$.
\end{asp}

Note that the upper bound of model output could change as we train the model for more epochs. To model the evolution of the output upper bound, we can relax Assumption~\ref{asp:bdd-output} and study a specific family of classifiers: ReLU networks. Later we can bound the generalization error evolution given the growth of path-norm proxy through iterations.

\begin{prop}[Polynomial growth of path-norm proxy]\label{prop:poly-pn}
    Consider an FL process with a $L$-layer neural network $f:\mathbb{R}^{d_x}\times\mathcal{W}\rightarrow\mathbb{R}^{d_y}$ as its global model, then its path-norm increases at most polynomially,
    \begin{equation}
        \lVert f(\cdot;\theta(t))\rVert_{\mathrm{pnp}}=\mathcal{O}(t^{L+1}E^{(L+1)/2})
    \end{equation}
    where $t\le R$ denotes the number of communication rounds and $E$ denotes the local training time.
    
    If we consider a generic decentralized algorithm, we have
    \begin{equation}
        \lVert f(\cdot;\theta(t))\rVert_{\mathrm{pnp}}=\mathcal{O}(e^{C't(L+1)}E^{(L+1)/2})
    \end{equation}
    where $C'$ is a constant independent of $t,L,E$.
\end{prop}

\noindent
\textbf{Interpretation of Proposition~\ref{prop:poly-pn}:} By Corollary 3.14 in \cite{b28}, small path-norm value guarantees an ``easier'' hypothesis space. Note that our upper bound on path-norm proxy is independent of dataset statistics and label noise.

\begin{figure}[H]
    \centering
    \includegraphics[width=7cm]{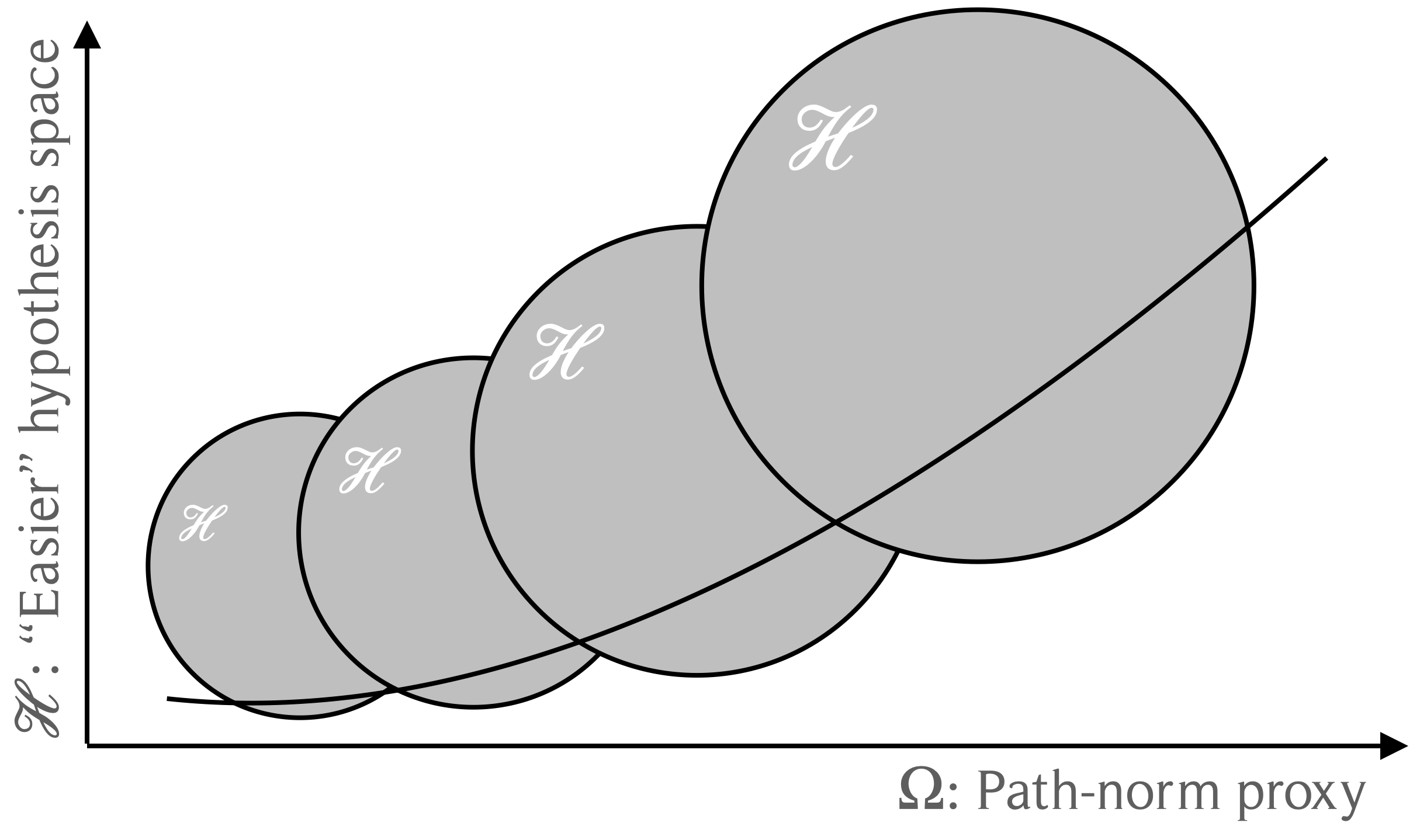}
    \caption{Growth of path-norm and resulted change of hypothesis space}
\end{figure}

\begin{cor}\label{cor:explicit-generror-bound}
    We can specify $\Omega$ in Theorem~\ref{thm:generalization-error-bound} with various assumptions:
    \begin{enumerate}
        \item By Assumption~\ref{asp:bdd-output}, $\Omega=C_f$.
        \item If we use a ReLU network as our model in the FL task, then $\Omega=\lVert f(\cdot;\theta(t))\rVert_{\mathrm{pnp}}$.
        \item By Assumption ~\ref{asp:bdd-x} and Proposition ~\ref{prop:poly-pn},
        \[
            \Omega=C_0 t^{L+1}E^{(L+1)/2}
        \]
        where $C_0$ is a constant independent of $t,E,L$.
    \end{enumerate}
\end{cor}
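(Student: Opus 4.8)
The plan is to read Corollary~\ref{cor:explicit-generror-bound} as three successive specializations of the single quantity $\Omega$ occurring in Theorem~\ref{thm:generalization-error-bound}: recall that the only property of $\Omega$ used in that theorem's proof is that it dominates $f_i(X)$ (more safely, $|f_i(X)|$) pointwise. So for each item I would produce such a dominating number under the assumptions listed, and then the conclusion is just a substitution into the inequality of Theorem~\ref{thm:generalization-error-bound}.

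For item~1, Assumption~\ref{asp:bdd-output} asserts that the range $f(\mathbb{R}^{d_x};\mathcal{W})$ is bounded, so $C_f:=\sup\{|f_i(x)|:i\in[1:d_y],\,x\in\mathbb{R}^{d_x},\,W\in\mathcal{W}\}<\infty$ and $\Omega=C_f$ is admissible. For item~2 I would unfold an $L$-layer ReLU network recursively as an alternating composition of the affine maps $\theta_0,\dots,\theta_L$ with the coordinatewise ReLU $\sigma$, and then apply $0\le\sigma(z)\le|z|$ and the triangle inequality one layer at a time. This gives
\[
|f_i(x)|\ \le\ \sum_{(i_0,\dots,i_L,\,i_{L+1}=i)}|x_{i_0}|\prod_{l=0}^{L}\bigl|\theta_l(i_l,i_{l+1})\bigr|\ \le\ \|x\|_\infty\,\bigl\|f(\cdot;\theta)\bigr\|_{\mathrm{pnp}},
\]
so under the input normalization $\|x\|_\infty\le1$ one may take $\Omega=\|f(\cdot;\theta(t))\|_{\mathrm{pnp}}$ at communication round $t$; alternatively this pointwise estimate can simply be quoted from \cite{b28}. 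Item~3 then follows by chaining item~2 with Assumption~\ref{asp:poly-pn}: expanding the $\mathcal{O}(t^{EL})$ statement yields a constant $C_0$, coming from the hidden constant in the asymptotics and hence independent of the round index $t$, with $\|f(\cdot;\theta(t))\|_{\mathrm{pnp}}\le C_0\,t^{EL}$ for all $t\le R$ (absorbing the finitely many small-$t$ terms into $C_0$ if necessary), whence $\Omega=C_0\,t^{EL}$.

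I expect the only step with real content to be the pointwise path-norm estimate in item~2; items~1 and~3 are bookkeeping around it. Two caveats I would make explicit. First, that estimate carries a factor $\|x\|_\infty$, so it reduces to exactly $\|f(\cdot;\theta)\|_{\mathrm{pnp}}$ only after invoking Assumption~\ref{asp:bdd-x} ($\mathcal{X}\subset[0,1]^{d_x}$, hence $\|x\|_\infty\le1$); I would therefore add Assumption~\ref{asp:bdd-x} to the hypotheses of item~2 as well, or else keep $\sup_x\|x\|_\infty$ inside the constant. Second, the claim in item~3 that $C_0$ is independent of $E$ and $L$ (not just $t$) is only as strong as the corresponding uniformity built into Assumption~\ref{asp:poly-pn}; if the $\mathcal{O}(\cdot)$ there is meant for fixed $E,L$ as $t\to\infty$, then $C_0$ should in principle be allowed to depend on $E$ and $L$, and I would phrase the corollary accordingly.
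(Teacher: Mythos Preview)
Your proposal is correct and matches the paper's intended approach: the paper does not spell out a separate proof of the corollary, but the explicit ReLU expansion in Appendix~III is precisely the identity you unfold to get $|f_i(x)|\le\|x\|_\infty\,\|f(\cdot;\theta)\|_{\mathrm{pnp}}$, and items~1 and~3 are indeed bookkeeping. Your two caveats---that item~2 tacitly relies on Assumption~\ref{asp:bdd-x} to drop the $\|x\|_\infty$ factor, and that the claimed independence of $C_0$ from $E,L$ is only as strong as the uniformity hidden in Assumption~\ref{asp:poly-pn}---are valid and in fact sharper than what the paper states.
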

There are some important implications behind Corollary~\ref{cor:explicit-generror-bound}.
\begin{itemize}
    \item Since the first two statements of the corollary do not rely on the aggregation mechanism of the algorithm, they could also be extended from FL to a decentralized learning scenario, e.g. Swarm learning in decentralized clinical ML \cite{swarm_learning}, decentralized optimization algorithms \cite{decentralized_opt_ex,decentralized_extragradient}, ML on blockchain \cite{ML_blockchain}.
    \item Theorem~\ref{thm:generalization-error-bound} does not characterize the upper bound with communication rounds and local epochs in its general form. But it is a symbolic and concise term that helps us understand the impact of label noise. Nonetheless, case 3 in Corollary~\ref{cor:explicit-generror-bound} provides the interplay between the label noise, communication rounds, and local epochs.
\end{itemize}

\section{Numerical Results}\label{sec:numerical-result}

We present three numerical experiments to validate our theoretical results and draw new insights. We first verify our theoretical work on the path-norm proxy. Then we show experiments of 2-client, 4-client, and 15-client FL settings.

Our main findings are 1) the growth of path-norm proxy empirically increases in a \textbf{polynomial} order in FL; 2) there exists an approximate negative \textbf{linear} relation between the test accuracy of global model and the number of incorrectly labeled data; 3) label noise slows down the \textbf{convergence} of FL algorithms and induces \textbf{over-fitting} to the global model.


\subsection{Path-norm Proxy}

In this subsection, we study the path norm proxy and observe its relation with the number of layers and communication rounds.

\begin{figure}
    \centering
    \subfloat[\label{subfig:4client-PN-alg}]{%
       \includegraphics[width=0.5\linewidth]{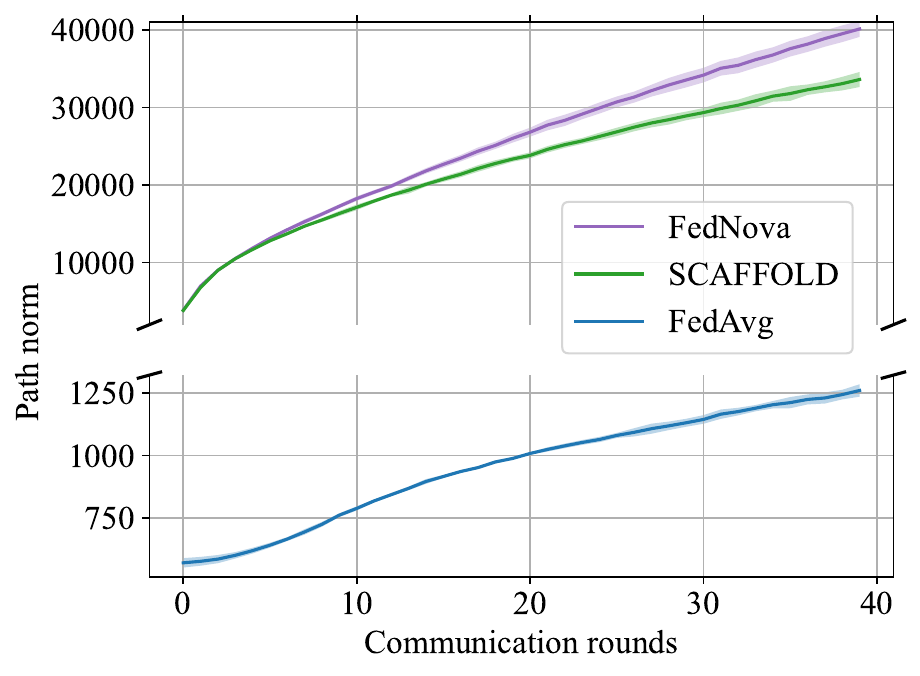}}
  \subfloat[\label{subfig:4client-PN-layer}]{%
        \includegraphics[width=0.5\linewidth]{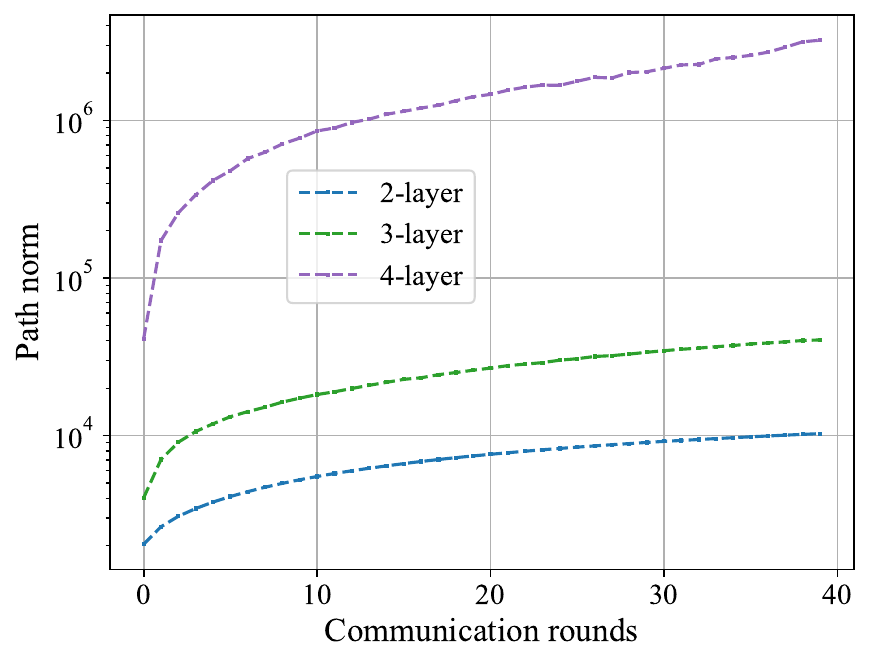}}
  \caption{A case study on MNIST dataset. (a) The path-norm generated by different FL algorithms with 3-layer ReLU network. (b) The path-norm generated by FedAvg algorithm and ReLU networks with different numbers of layers.}
    \label{fig:4clientpn}
\end{figure}

\noindent
\textbf{Compare different FL algorithms.} We use the same neural network structure and consider $N=4$ clients. We study three FL algorithms, FedAvg \cite{b1}, SCAFFOLD, and FedNova on MNIST dataset. We observe a concave growth of the global model's path-norm in Figure~\ref{subfig:4client-PN-alg}. We can sort the three algorithms according to the scale of their path-norm values
\[
    \text{Path norm value: FedNova} > \text{SCAFFOLD} >> \text{FedAvg}
\]
which corresponds to their different usage of local gradient $\mathrm{grad}$ and local parameter change $\Delta w$

\begin{center}
    \begin{tabular}{|c|c|}
        \hline
        Algorithm & Aggregation input\\
        \hline
        FedNova & $\mathrm{grad}$\\
        \hline
        SCAFFOLD & $\mathrm{grad},\Delta w$\\
        \hline
        FedAvg & $\Delta w$\\
        \hline
    \end{tabular}
\end{center}

These results empirically show that the path norm increases polynomially in FL.

\noindent
\textbf{Compare ReLU networks with different numbers of layers.} We train ReLU networks using FedAvg on MNIST. We use different numbers of layers in $\{2,3,4\}$. The result is illustrated in Figure~\ref{subfig:4client-PN-layer}. To verify the polynomial rule, we use a logarithmic scale on the y-axis. The three path-norm curves have similar shapes and almost differ up to a constant factor. This result is consistent with Proposition~\ref{prop:poly-pn}.

\subsection{Pilot experiments}

We run 2-client experiments with FedAvg algorithm on MNIST dataset. We study a 2-client setting for multiple considerations.
\begin{itemize}
    \item 2-client setting exists in practice. In cross-silo FL, clients could be enterprises, and each client could provide abundant data, so the total number of clients is relatively small. For example, since 2019, two insurance companies, Swiss Re and WeBank, have collaborated on federated learning \cite{b58}.
    \item This experiment serves as a starting point and gives us a thorough pedagogical understanding of the impact of label noise. We will study the 4-client and 15-client cases in the next subsection.
\end{itemize}

We generate the local datasets for two clients by dividing the whole dataset into two equally-sized parts. We add label noise to local datasets by uniformly flipping some instances' labels to other class labels. Each client has different noise levels. Denote the noise level of client $i$ as $\mathrm{wp}_i$, then
$(\mathrm{wp}_1,\mathrm{wp}_2)\in\{0\%,10\%,20\%,\dots,80\%,90\%\}^2$. Pathological noise levels (greater than $50\%$) have been studied in supervised learning settings \cite{AAAI2022_90percentNoise}.
We illustrate the test accuracy of global models under different degrees of label noise as bar charts in Fig~\ref{fig:2client-thing}. To verify the linear trend of test accuracy, we perform linear regression and visualize the result in Fig~\ref{fig:2client-regression}.

\begin{figure}[H]
    \centering
    \setkeys{Gin}{width=0.33\linewidth}
    \subfloat[\label{subfig:2client-thing}]{\includegraphics{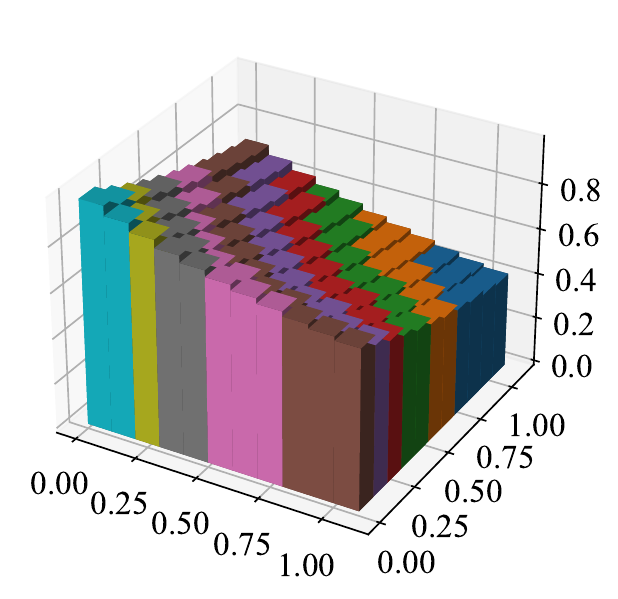}}
    \subfloat[\label{subfig:2client-fixclient1}]{\includegraphics{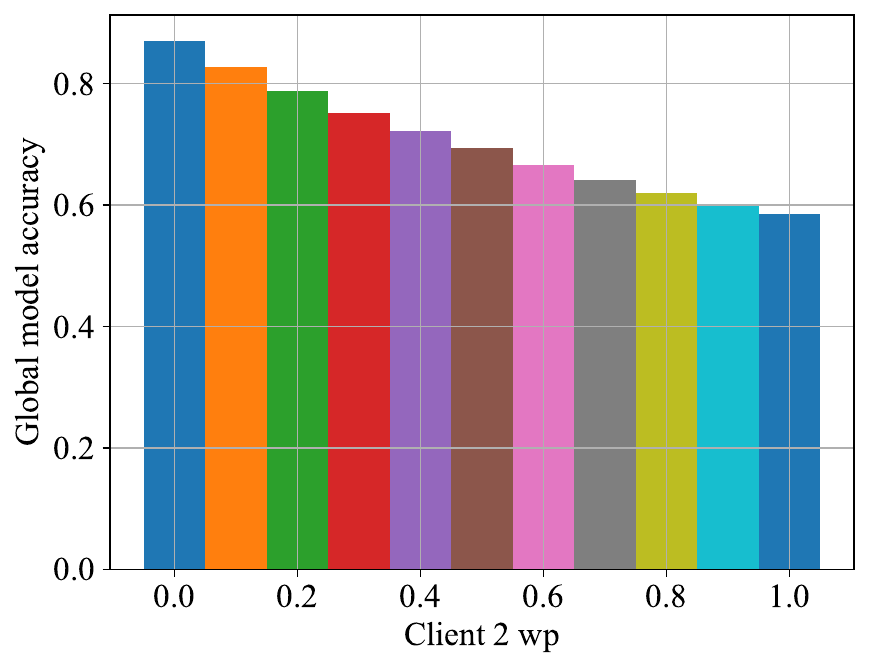}}
    \subfloat[\label{subfig:2client-fixclient2}]{\includegraphics{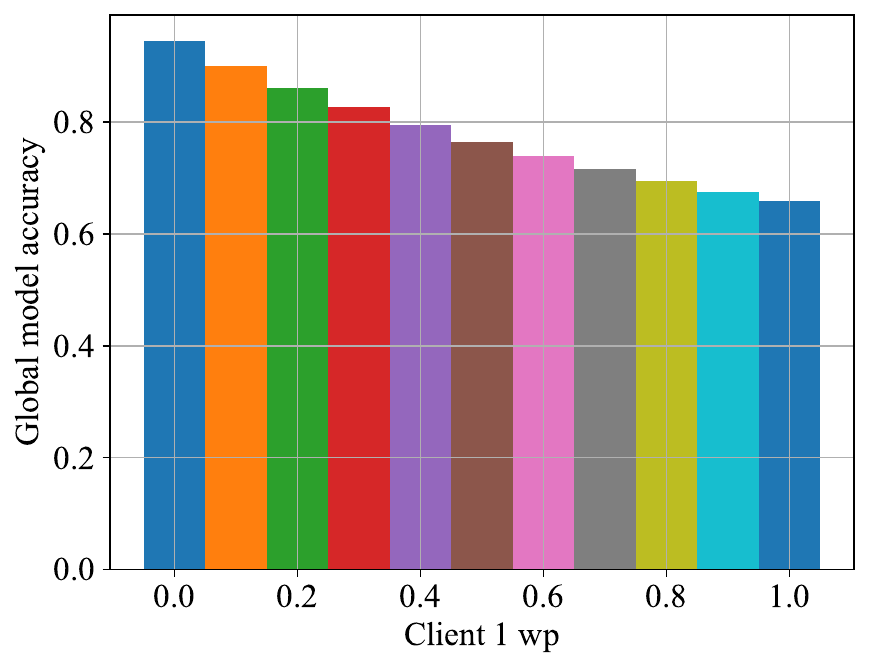}}
    \caption{(a) Bar plot of global model accuracy. x,y axes control the levels of label noise of each client. z axis represents the test accuracy of the global model; (b) Slice of bar plot when client $1$ has $30\%$ of wrongly labelled data; (c) Slice of bar plot when client $2$ has $10\%$ of wrongly labelled data.}
    \label{fig:2client-thing}
\end{figure}

\begin{figure}
\centering
\setkeys{Gin}{width=0.9\linewidth}
    \subfloat{\includegraphics{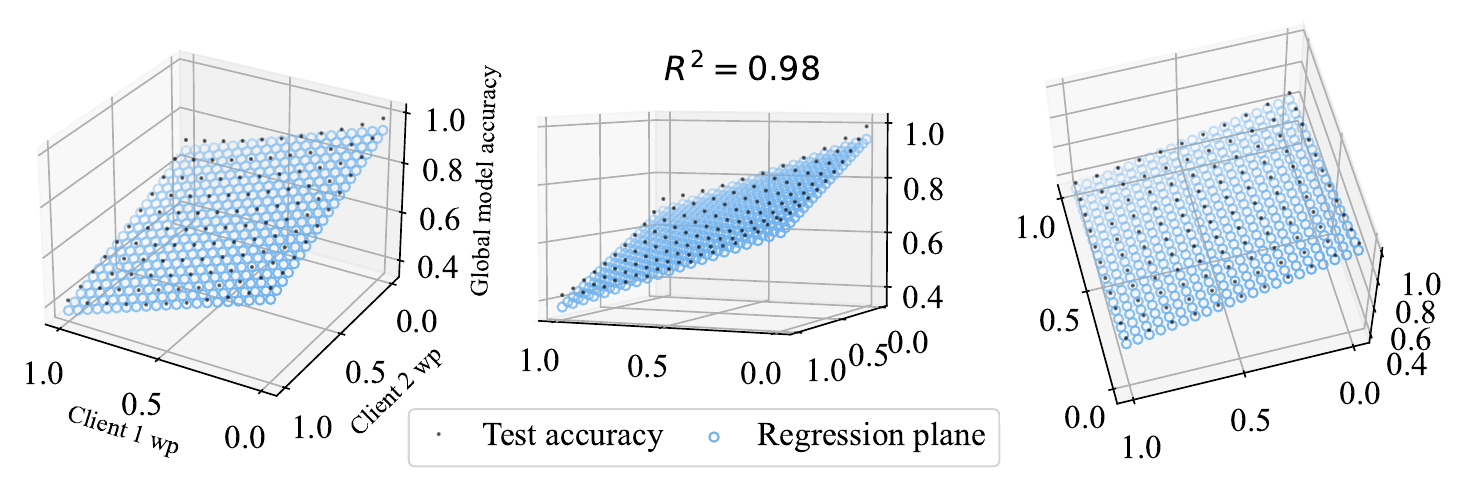}}
\caption{Linear regression on the global model accuracy.}
\label{fig:2client-regression}
\end{figure}

\begin{figure}
\centering
\setkeys{Gin}{width=0.33\linewidth}
    \subfloat[FedAvg\label{subfig:4client-client1-fedavg}]{\includegraphics{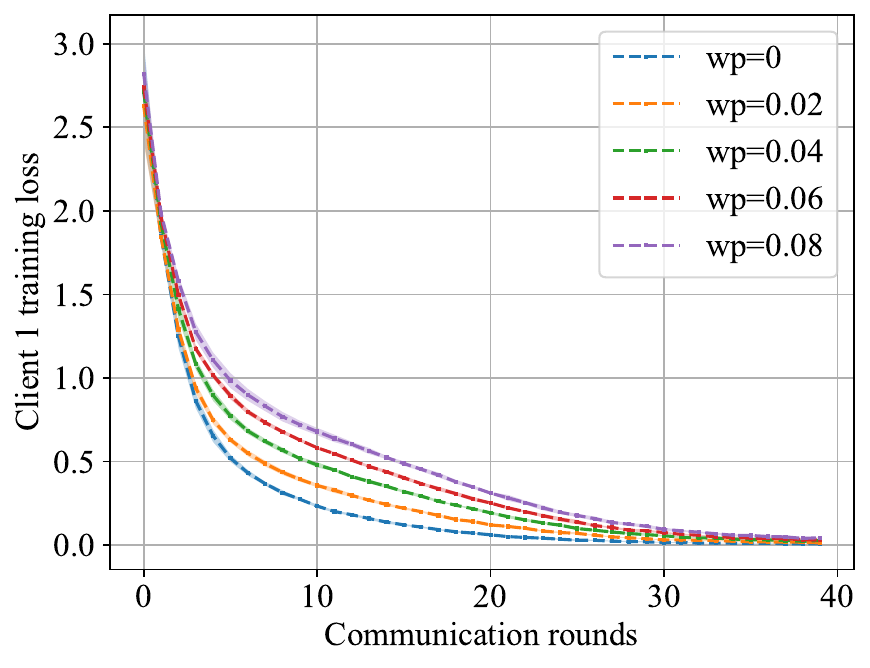}}
    \subfloat[SCAFFOLD\label{subfig:4client-client1-scaffold}]{\includegraphics{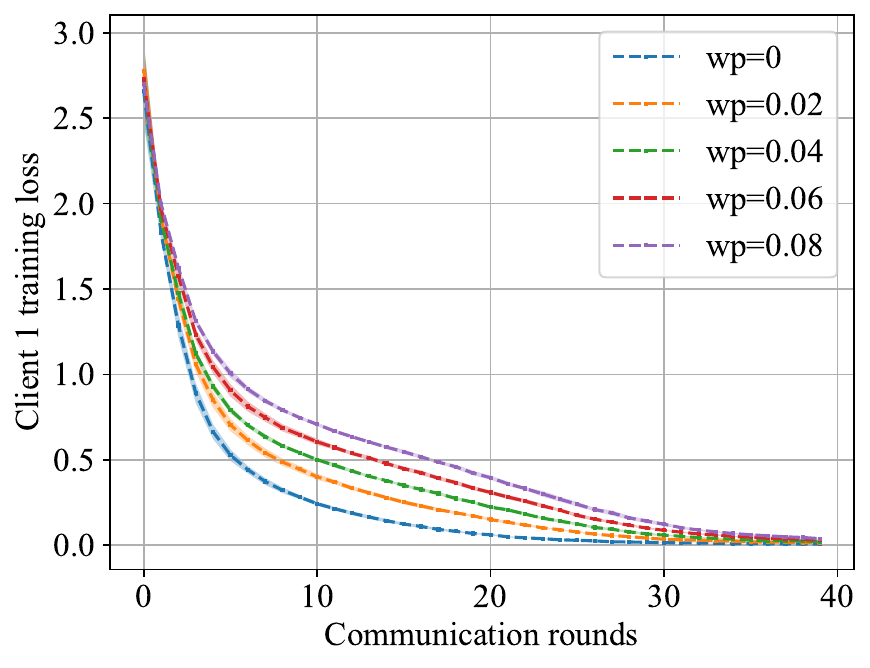}}
    \subfloat[FedNova\label{subfig:4client-client1-fednova}]{\includegraphics{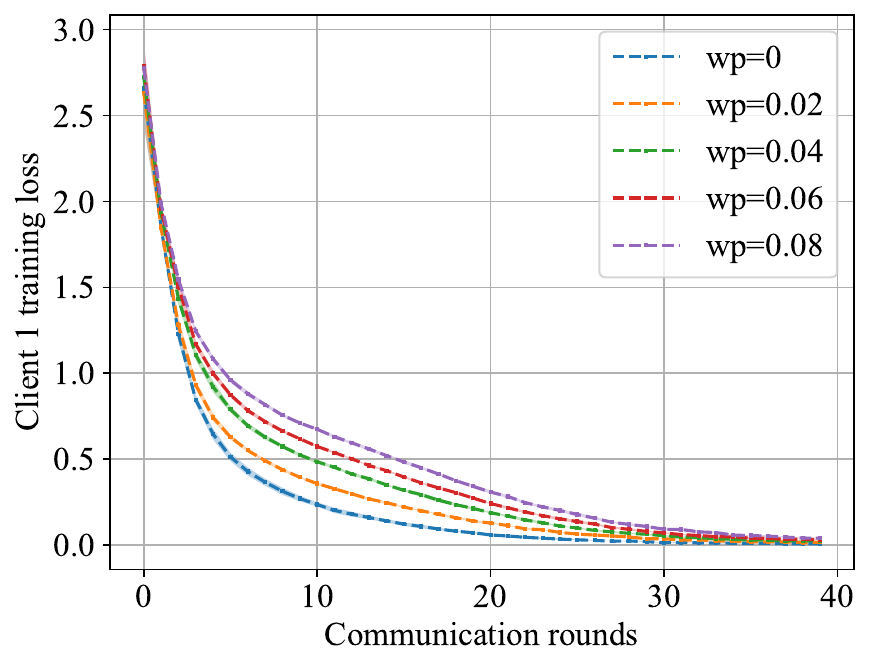}}
\caption{Training loss of Client 1 for $0\%,2\%,4\%,6\%,8\%$ percentages of wrongly labelled data (4-client setting).}
\label{fig:4client-client1-loss}
\end{figure}

\begin{figure}
\centering
\setkeys{Gin}{width=0.33\linewidth}
    \subfloat[FedAvg\label{subfig:15client-client1-fedavg}]{\includegraphics{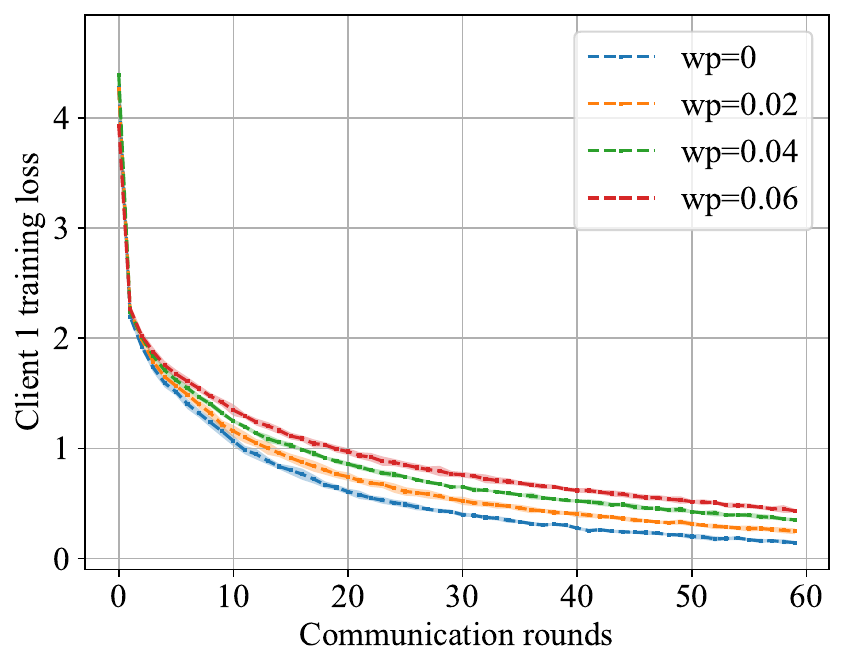}}
    \subfloat[SCAFFOLD\label{subfig:15client-client1-scaffold}]{\includegraphics{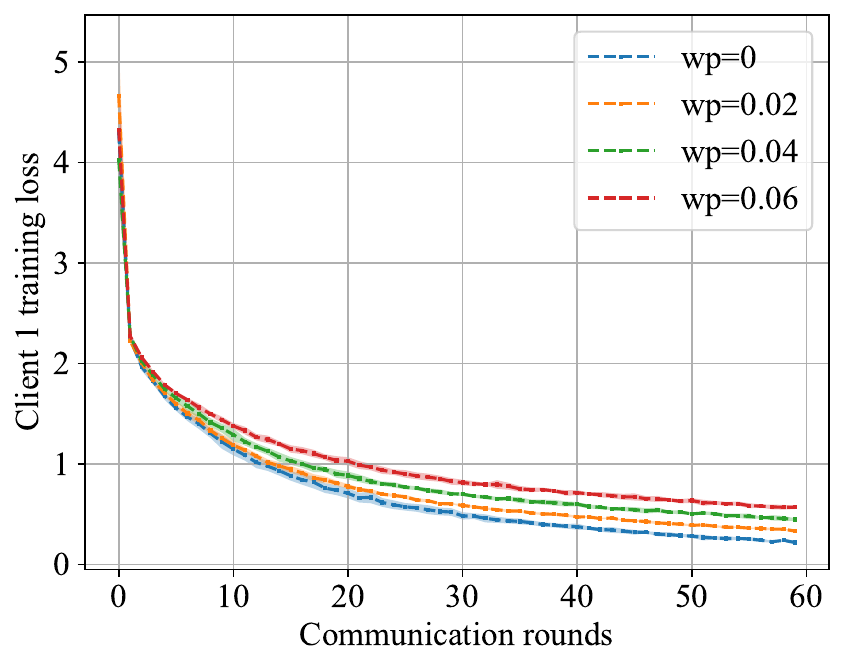}}
    \subfloat[FedNova\label{subfig:15client-client1-fednova}]{\includegraphics{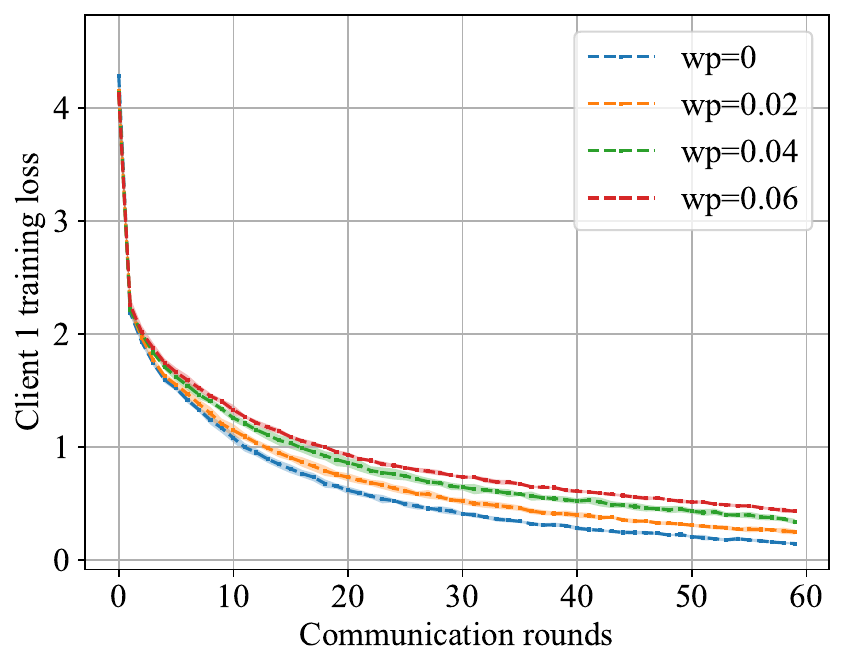}}
\caption{Training loss of Client 1 for $0\%,2\%,4\%,6\%,8\%$ percentages of wrongly labelled data (15-client setting).}
\label{fig:15client-client1-loss}
\end{figure}

\begin{figure}
\centering
\setkeys{Gin}{width=0.33\linewidth}
    \subfloat[FedAvg]{\includegraphics{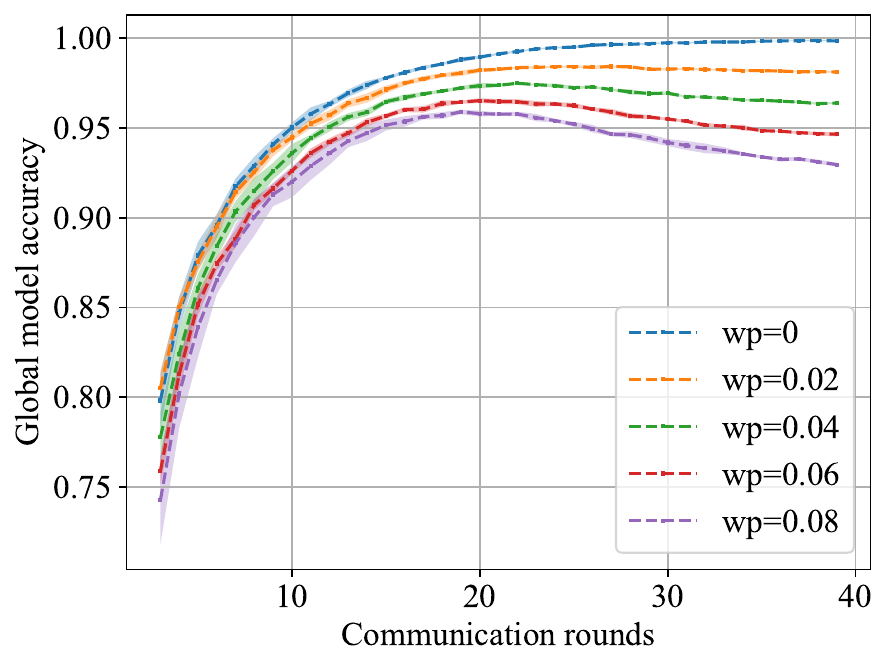}}
    \subfloat[SCAFFOLD]{\includegraphics{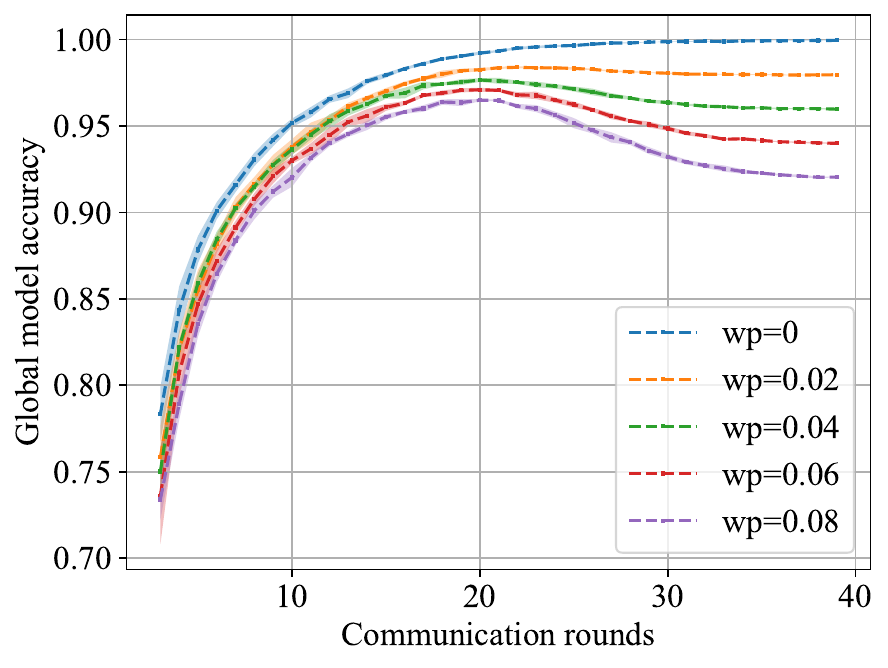}}
    \subfloat[FedNova]{\includegraphics{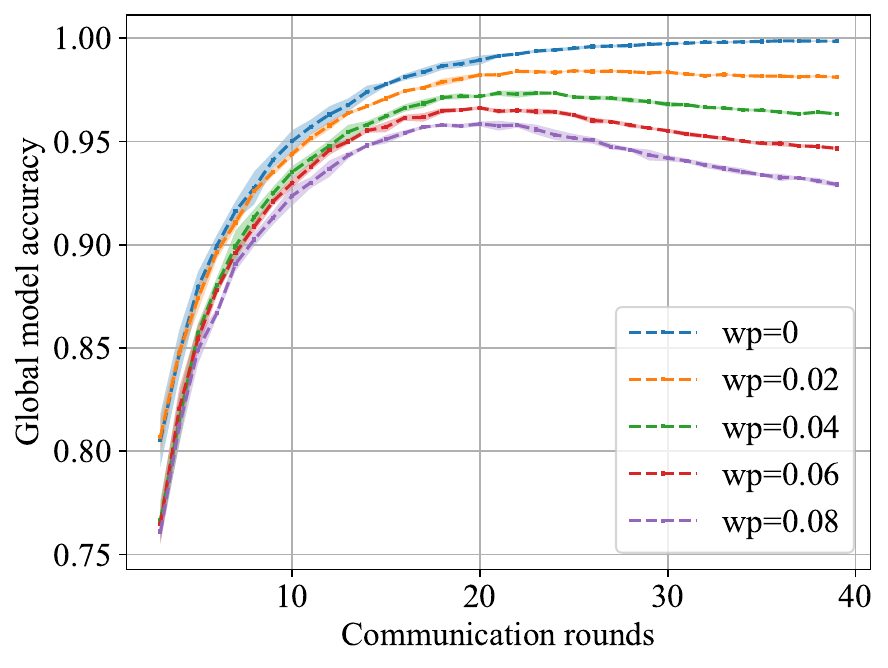}}
\caption{Test accuracy of global model for $0\%,2\%,4\%,6\%,8\%$ percentages of wrongly labelled data (4-client setting).}
\label{fig:4client-cvg}
\end{figure}

\begin{figure}
\centering
\setkeys{Gin}{width=0.33\linewidth}
    \subfloat[FedAvg]{\includegraphics{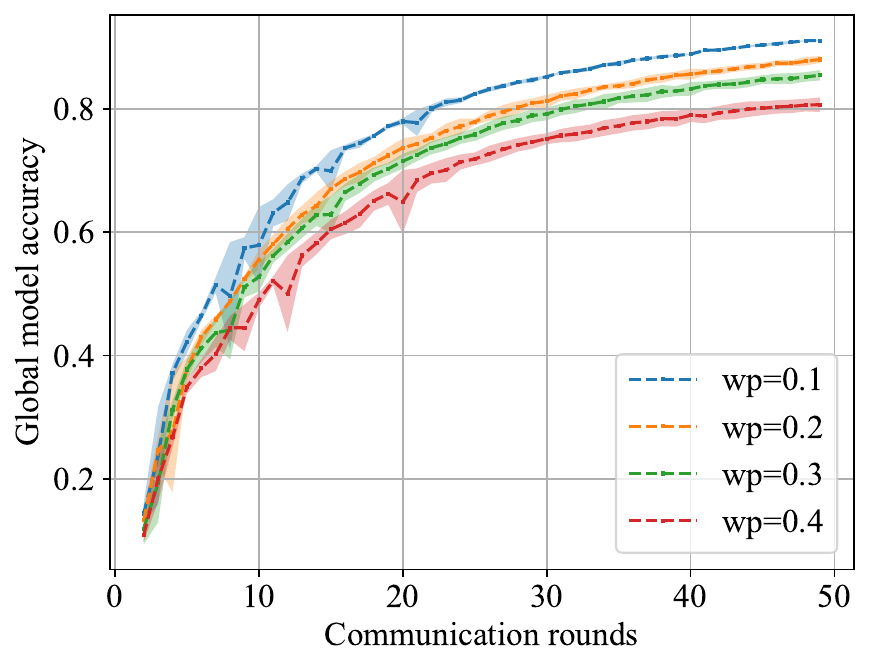}}
    \subfloat[SCAFFOLD]{\includegraphics{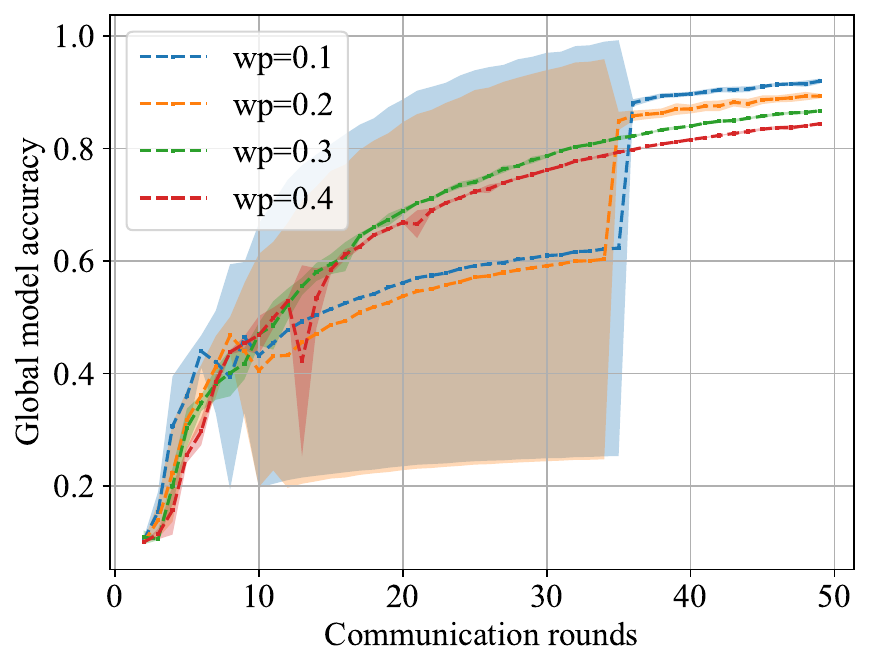}}
    \subfloat[FedNova]{\includegraphics{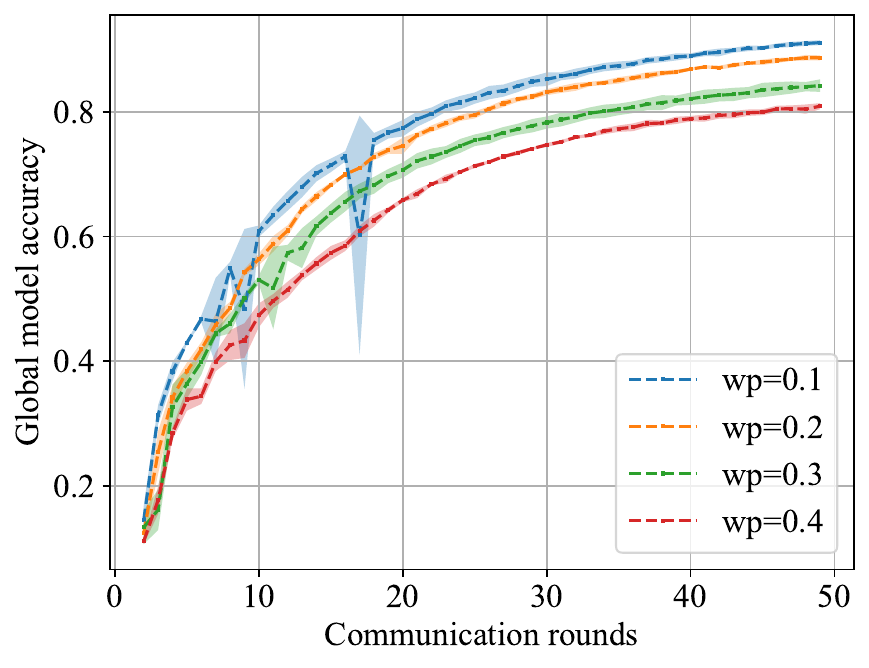}}
\caption{Test accuracy of global model for $1\%,2\%,3\%,4\%$ percentages of wrongly labelled data (30-client setting).}
\label{fig:30client-cvg}
\end{figure}

\begin{figure}
    \centering
    \setkeys{Gin}{width=0.33\linewidth}
    \subfloat[4-client]{\includegraphics{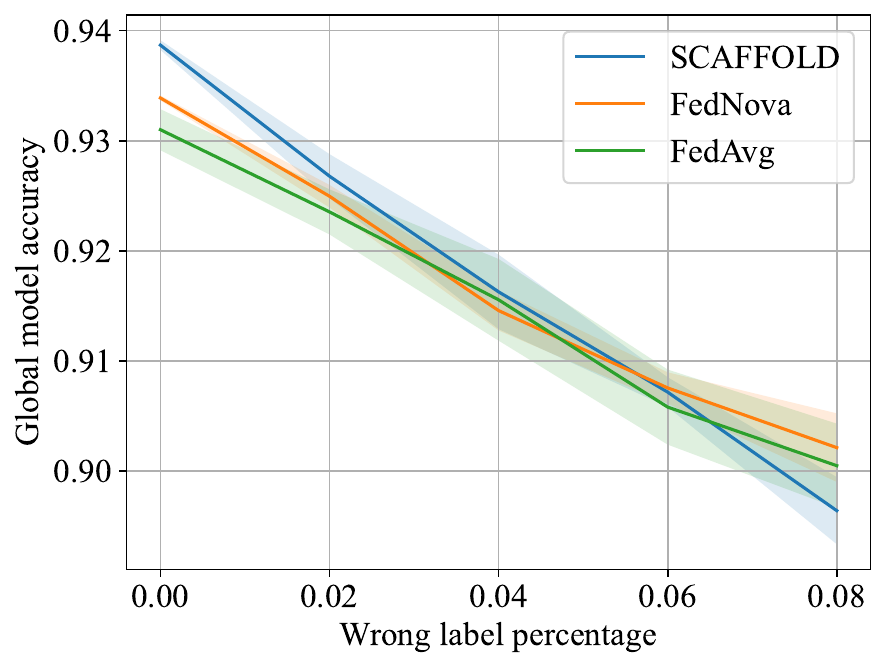}}
    \subfloat[15-client]{\includegraphics{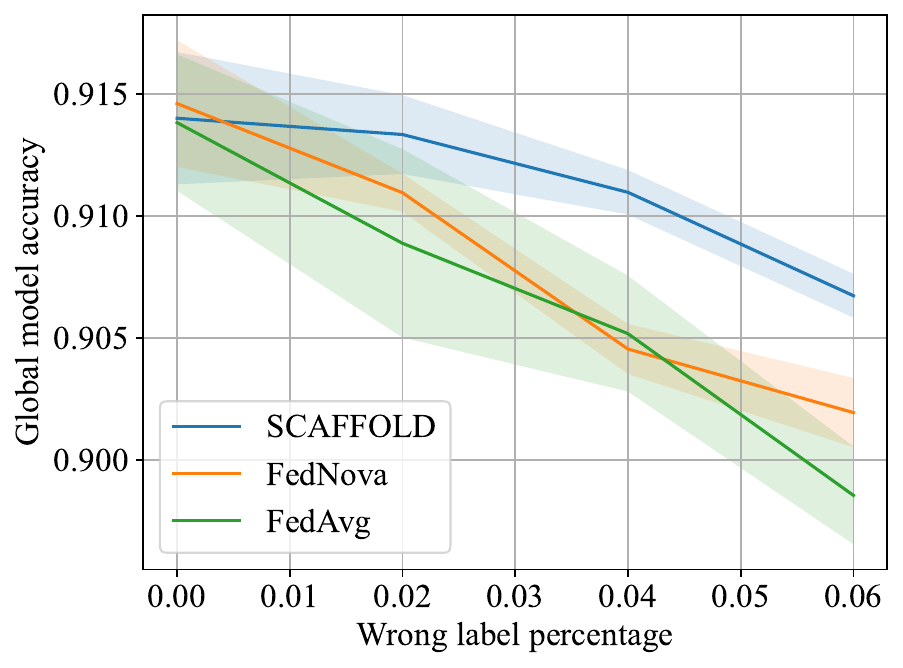}}
    \subfloat[30-client]{\includegraphics{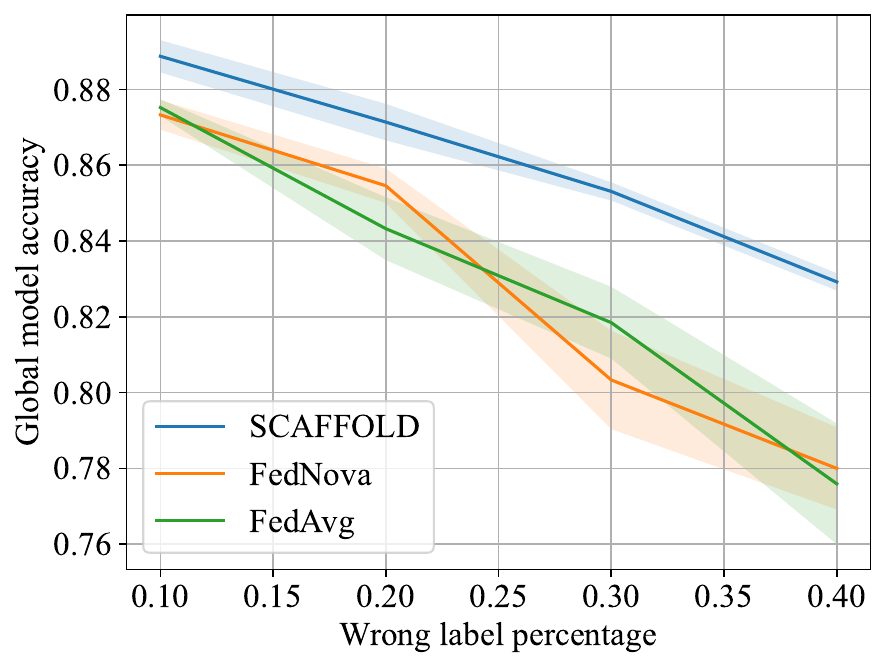}}
    \caption{Test accuracy of global model by different FL algorithms under different label error rates.}
    \label{fig:4client-wacc-alg}
\end{figure}

\noindent
\textbf{Negative bilinear trend by label noise.} Figure~\ref{fig:2client-thing} shows a negative bilinear relation between the test accuracy of the global model and noise label. When we apply linear regression on the test accuracy of the global model and the proportion of wrongly labeled data, we obtain a coefficient of determination of $0.98$ in Figure~\ref{fig:2client-regression}. That means the relation between the test accuracy and label noise has a strong linear relation.

\subsection{Experiments with larger cohort size}
\label{subsec:4client-experiment}

We run experiments on CIFAR-10 dataset respectively with four clients and fifteen clients. Local datasets are generated by dividing the whole dataset into equally-sized parts. We add label noise to local datasets by uniformly flipping some instances' labels to other class labels. In a case study by Gu et al., the real human annotation has a rater error rate of around $4.8\%$ \cite{b57}. Therefore it is reasonable to study the error rate within a relatively small range that contains $4.8\%$, i.e., from $0\%$ to $10\%$. We set the same proportion of wrongly labeled data for each client ($0\%,2\%,4\%,8\%$).

\noindent
\textbf{Slow Convergence by label noise.} In Figure~\ref{fig:4client-client1-loss} and Figure~\ref{fig:15client-client1-loss}, we plot how client 1's local model loss depends on the communication rounds at different percentages of wrongly labeled data. The training loss decreases slower with a larger proportion of wrongly labeled data, i.e., the algorithm converges slower with a larger proportion of wrongly labeled data.

\noindent
\textbf{Overfitting by label noise.} We observe in Figure~\ref{fig:4client-cvg} that for all three algorithms, the global model's test accuracy decreases after 20 communication rounds. The global model is more over-fitted with a larger percentage of wrongly labeled data. This result provides an engineering insight in FL that the over-fitting of the global model could result from some wrongly labeled data in the local datasets. It also motivates the study of mitigating label noise in FL \cite{b46}.


\noindent
\textbf{Negative linear trend by label noise.} In Figure~\ref{fig:4client-wacc-alg}, all three algorithms show a negative linear relation between the test accuracy of the global model and the proportion of wrongly labeled data. This is consistent with our theoretical analysis. We also observe piece-wise linear trend in the experiments. The model accuracy decreases more when the total noise level exceeds certain threshold.

\subsection{Experiments with both label imbalance and instance-dependent label errors}

We run experiments on CIFAR-10 dataset with thirty clients (Figure~\ref{fig:30client-cvg}). Local datasets are generated by dividing the whole dataset with label imbalance. First, we generate the label imbalance with a symmetric Dirichlet distribution $\mathrm{Dir}(\alpha=10)$. Then we set the error ratio in the range of $0.1$ to $0.4$ and add label noise to local datasets with an instance-dependent error generator. Here we use a classifier (a ResNet-18 trained on CIFAR-10 dataset with a test accuracy 78.74\%) as our error generator. The classifier learned to correctly classify some easy instances while failing on the difficult ones. So the label errors the classifier generates are instance-dependent and more realistic.

\section{Discussions}

In this section, we discuss the limitation and potential application of our work.

\noindent
\textbf{Improving theoretical bounds:} We prove a linear upper bound for the generalization error, and the bound is consistent with numerical results. However, the upper bound can be loose. One can provide a lower bound or improve the upper bound by making more restrictive assumptions. For example, one can consider a regression task with MSE loss function that provides nicer theoretical properties \cite{jasonlee-label-noise}.

\noindent
\textbf{More comprehensive experiments:} Our experiments use a small number of clients, which applies to cross-silo FL. In future research, we plan to study the impact of label noise with a larger number of clients (e.g., as in cross-device FL).

\noindent
\textbf{Application:} Our results potentially serve as ``domain knowledge'' to improve FL algorithm design. Our work could also be used in designing incentive mechanisms in FL systems \cite{b34}. In particular, the qualitative relation in this paper helps model the performance of global model under label noise.

\noindent
\textbf{Methodology:} The emergence of large machine learning models has shifted the nature of AI research from an engineering science (iteratively improving models) to a natural science (probing capabilities of the models we designed) \cite{b26}. Researchers have been proposing hundreds of new models/algorithms for different AI problems. However, more must be done to understand how and why a proposed model/algorithm performs in a certain way. We must build theories based on observation and experiments to understand these artificial black boxes. In this way, we can transform AI research from engineering alchemy to white-box chemistry. Our work follows this scientific paradigm shift and conducts a case study for different FL models under label noise.

\section{Concluding Remarks}\label{sec:conclusion}

This paper takes the first step to quantify the impact of label noise on the global model in FL. The critical challenge is that we have little knowledge of the underlying information related to local data distributions and we do not have an explicit expression of the outcome of an FL algorithm. We show with both empirical evidence and theoretical proof that 1) label noise linearly degrades the global model's performance in FL; 2) label noise slows down the convergence of the global model; 3) label noise induces overfitting to the global model. Our results could provide insights into the design of a noise-robust algorithm and the design of an incentive mechanism.

\appendix

\section{Acknowledgments}
The work was partially supported through grant USDA/NIFA 2020-67021-32855, and by NSF through  IIS-1838207, CNS 1901218, OIA-2134901.

\bibliography{refs}

\section*{Appendix I}

In Federated Learning, the ``non-IID'' issue is defined as the statistical difference or statistical dependence of different local datasets from different clients \cite{b19,b21}. In this work, we consider that for different clients $i$ and $j$, the distributions $\pi_i,\pi_j$ of their local datasets are different
\[
    \pi_i\not=\pi_j
\]

There are further two major types of non-IID: feature distribution skew and label distribution skew.

\begin{itemize}
    \item For label distribution skew, all clients share the conditional probability $\pi_i(x|y)$, that is,
    \[
        \pi_i(x|y)=\pi_j(x|y),\forall i,j\in[1,2,\dots,N],\forall (x,y)\in\mathbb{R}^{d_x+d_y}
    \]
    while clients have different label distributions $\pi_i(y)$.
    \item For feature distribution skew, clients share the same conditional probability $\pi_i(y|x)$ while clients have different feature distributions $\pi_i(x)$.
\end{itemize}

\section*{Appendix II}

The ReLU network is a powerful prototype model among various types of neural network for its successful performance in different fields, including image classification and natural language processing \cite{b27}. In this work, we study ReLU network as a sub-case in generalization error analysis.

We represent a $L$-layer neural network as a map $f:\theta\mapsto f(\cdot;\theta)$ where $\theta$ denotes the weight of the network and $f(\cdot;\theta):x\mapsto f(x;\theta)$ is a function that maps an input $x\in\mathbb{R}^{d_{X}}$ to an output $y\in\mathbb{R}^{d_{Y}}$ of the network.

Denote the width of the $l$-th layer as $d_l$ where $1\le l\le L$ and let $d_{0}=d_x+1,d_{L+1}=d_Y$, i.e. there are $d_l$ nodes in the $l$-th layer. Here we simplify the notation by converting the affine map to a linear one: identify $x\in\mathbb{R}^{d_x}$ with $(x,1)\in\mathbb{R}^{d_x+1}$ \cite{b28}.

\begin{defi}[Rectified linear unit]
    The rectified linear unit function is defined to be
    \begin{equation}
        \sigma:\mathbb{R}\rightarrow\mathbb{R}_{\ge 0},\quad\sigma(x)=\max(0,x)
    \end{equation}
\end{defi}

\begin{defi}[Rectifier activation function]
    By an abuse of notation, we define the rectifier activation function by applying the rectified linear unit function element-wise
    \begin{equation}
        \sigma:\mathbb{R}^d\rightarrow\mathbb{R}_{\ge 0}^d,\quad\sigma\left(\begin{bmatrix}
            x_1\\\vdots\\x_d
        \end{bmatrix}\right)=\begin{bmatrix}
            \max(0,x_1)\\\vdots\\\max(0,x_d)
        \end{bmatrix}
    \end{equation}
\end{defi}

The $k$-th entry of output $f(x;\theta)$ of a $L$-layer ReLU network is \cite{b27,b28}

\medskip

\begin{equation}
    \resizebox{\hsize}{!}{
        $\begin{aligned}
        f_k(x;\theta)=&\sum_{i_L=1}^{d_L}\theta_L(i_L,k)\sigma\left(\sum_{i_{L-1}=1}^{d_{L-1}}\theta(i_{L-1},i_L)\sigma\left(\sum_{i_{L-2}}\cdots\sigma\left(\sum_{i_1=1}^{d_1}\theta_1(i_1,i_2)\sigma\left(\sum_{i_0=1}^{d_0}\theta_0(i_0,i_1)x_{i_0}\right)\right)\right)\right)\\
        =&\sum_{(i_0,\dots,i_L)}\theta_L(i_L,k)\cdot\prod_{l=1}^{L}\theta_{l}(i_{l-1},i_l)\cdot\prod_{l=1}^{L}\mathbf{1}\left\{g^l_{i_l}(\theta,x)>0\right\}\cdot x_{i_0}
    \end{aligned}$}
\end{equation}

\medskip

\noindent
where parameter $\theta_l(i_{l},i_{l+1})$ refers to the weight of the edge connecting the $i_l$-th node in layer $l$ and the $i_{l+1}$-th node in layer $l+1$; $g^l_{i_l}$ denotes the output of $i_l$-th node in layer $l$.

\section*{Appendix III}

\begin{defi}[Cross-entropy loss]\label{def:cross-entropy}
    Given a neural network $f$, an input vector $x$, and the output vector $y=\mathrm{Softmax}(f(x))$, we define the cross-entropy loss as
    \begin{equation}
        -\sum_{i}y_i\log\left(\frac{\exp(f_i(x))}{\sum_j\exp(f_j(x))}\right)
    \end{equation}
    where subscript $i$ denotes the $i$-th entry of a vector.
\end{defi}

\begin{proof}[Proof of Theorem~\ref{thm:generalization-error-bound}]
    Without loss of generality, we prove the theorem given that we use a neural network as our classifier. By Lemma~\ref{lem:cross-entropy-expand}, we expand the formula of generalization error as follows
    \begin{equation}
        \begin{aligned}
        &G(W)=\left|L(W)-L^{\dag}(W)\right|\\
        =&\left|\sum_{k=1}^{N}\frac{n_k}{n}\sum_{i=1}^{C}\int_{\mathcal{X}}f_i(x)\left(d\mathrm{Pr}_{\pi_k}(x,y)-d\mathrm{Pr}_{\mu}(x,y)\right)\right|\\
        &\text{by Assumption~\ref{asp:annotation-skew}}\\
        =&\left|\sum_{k=1}^{N}\frac{n_k}{n}\mathbb{E}_{X}\left[\sum_{i=1}^{C}f_i(X)\left(\mathrm{Pr}_{\pi_k}(Y=i|X)-\mathrm{Pr}_{\mu}(Y=i|X)\right)\right]\right|\\
        \le&\sum_{k=1}^{N}\frac{n_k}{n}\mathbb{E}_{X}\left[\sum_{i=1}^{C}f_i(X)\left|\mathrm{Pr}_{\pi_k}(Y=i|X)-\mathrm{Pr}_{\mu}(Y=i|X)\right|\right]\\
        \le&\sum_{k=1}^{N}\frac{n_k}{n}\Omega\cdot\mathbb{E}_{X}\left[\sum_{i=1}^{C}\left|\mathrm{Pr}_{\pi_k}(y=i|X)-\mathrm{Pr}_{\mu}(y=i|X)\right|\right]
        \end{aligned}
    \end{equation}
\end{proof}

\begin{thm}[Growth of path-norm proxy {\cite[Corollary 5]{b28}}]\label{thm:relu-pn-grow-weinan}
    Consider an arbitrarily wide $L$-layer ReLU neural network. If the network's weight evolves under continuous gradient flow dynamics, then the network's path-norm increases at most polynomially
    \begin{equation}
        \lVert f(\cdot;\theta(t))\rVert_{\mathrm{pnp}}\le\left(C_0+\sqrt{\mathcal{R}(f(\cdot;\theta(t=0)))}t^{1/2}\right)^{L+1}
    \end{equation}
    where $\theta(t)$ denotes the weight of the network at time $t$, $C_0$ is a constant and $\mathcal{R}$ is the expectation of a sufficiently smooth loss function $\ell$
    \begin{equation}
        \mathcal{R}(\theta;f)=\int_{\mathbb{R}^{d_x+d_y}}\ell(f(x;\theta),y)\mathbb{P}(dx\otimes dy)
    \end{equation}
\end{thm}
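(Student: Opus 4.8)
The plan is to combine two structural facts: (a) gradient flow is a descent dynamics for the population risk $\mathcal{R}$, so the total length of the weight trajectory up to time $t$ is only $O(t^{1/2})$; and (b) the path-norm proxy is a polynomial in the weights that is \emph{homogeneous of degree $L+1$} --- exactly one edge-weight factor per layer along each input-to-output path --- so controlling the growth of its $(L+1)$-st root reduces to controlling the weight trajectory. Hence the target inequality is $\lVert f(\cdot;\theta(t))\rVert_{\mathrm{pnp}}^{1/(L+1)}\le C_0+\sqrt{\mathcal{R}(f(\cdot;\theta(t=0)))}\,t^{1/2}$, and the theorem is its $(L+1)$-st power.

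First I would establish the energy estimate. Along $\dot\theta=-\nabla_\theta\mathcal{R}(f(\cdot;\theta))$ one has $\tfrac{d}{dt}\mathcal{R}=-\lVert\nabla_\theta\mathcal{R}\rVert^2=-\lVert\dot\theta\rVert^2\le 0$, which gives simultaneously $\mathcal{R}(f(\cdot;\theta(t)))\le\mathcal{R}(f(\cdot;\theta(t=0)))$ and $\int_0^t\lVert\dot\theta(s)\rVert^2\,ds\le\mathcal{R}(f(\cdot;\theta(t=0)))$. Cauchy--Schwarz then yields $\int_0^t\lVert\dot\theta(s)\rVert\,ds\le\sqrt{t\,\mathcal{R}(f(\cdot;\theta(t=0)))}$, so by the triangle inequality $\lVert\theta(t)\rVert\le\lVert\theta(0)\rVert+\sqrt{\mathcal{R}(f(\cdot;\theta(t=0)))}\,t^{1/2}$. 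Next, writing the path-norm as the bilinear form $\mathbf{1}^{\top}|\theta_L|\cdots|\theta_0|\,\mathbf{1}$ with entrywise absolute values and estimating the matrix product layer by layer, one gets $\lVert f(\cdot;\theta)\rVert_{\mathrm{pnp}}\le(C_1\lVert\theta\rVert)^{L+1}$; composing with the trajectory bound and relabeling constants ($C_0:=C_1\lVert\theta(0)\rVert$, absorbing $C_1$ into the coefficient of $t^{1/2}$) produces the claimed inequality, with the descent property ensuring that the $\mathcal{R}$ on the right is evaluated at $t=0$.

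The main obstacle is that the naive layerwise estimate in the previous step pays width factors in $C_1$, which is fatal for the "arbitrarily wide" regime in the statement; the heart of the proof is to make that step width-free. One route is to differentiate $\Phi(t):=\lVert f(\cdot;\theta(t))\rVert_{\mathrm{pnp}}$ directly: expanding over paths gives $\dot\Phi=\sum_{l}\sum_{i,j}\alpha_l(i)\,\beta_{l+1}(j)\,\mathrm{sgn}(\theta_l(i,j))\,\dot\theta_l(i,j)$, where $\alpha_l,\beta_l$ are the forward/backward partial path-norms at layer $l$, and then to invoke the conservation (layerwise balancedness) law of gradient flow for positively homogeneous networks to bound the coefficients $\alpha_l(i)\beta_{l+1}(j)$ uniformly --- independently of the widths --- by a constant multiple of $\Phi^{L/(L+1)}$; this turns the identity into $\tfrac{d}{dt}\Phi^{1/(L+1)}\le c\,\lVert\dot\theta\rVert$ and lets the energy estimate close the loop. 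An alternative is to pass to the mean-field / Barron-space formulation, where the width is infinite, the path-norm becomes an integral (total-variation-type) norm of the layer measures, and the whole descent-plus-Cauchy--Schwarz argument is run on the corresponding Wasserstein gradient flow. Proving that the dynamics cannot let one layer's path-norm contribution outrun the others --- i.e. the width-independent control on the $\alpha_l(i)\beta_{l+1}(j)$ --- is the delicate point; everything else is bookkeeping.
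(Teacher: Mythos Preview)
The paper does not prove this theorem. It is stated in Appendix~I purely as a citation of Corollary~5 from reference~[b28] (E and Wojtowytsch), with no accompanying proof. There is therefore no ``paper's own proof'' to compare your proposal against; the authors invoke the result as an external black box to motivate Assumption~\ref{asp:poly-pn}.

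That said, your sketch is a faithful outline of how the cited result is actually established. The energy-descent identity $\tfrac{d}{dt}\mathcal{R}=-\lVert\dot\theta\rVert^2$ followed by Cauchy--Schwarz to get $\int_0^t\lVert\dot\theta\rVert\,ds\le\sqrt{t\,\mathcal{R}(0)}$, combined with the $(L{+}1)$-homogeneity of the path-norm, is precisely the backbone of the argument in~[b28]. You also correctly isolate the only genuinely delicate step --- obtaining a width-independent bound of the form $\tfrac{d}{dt}\Phi^{1/(L+1)}\le c\,\lVert\dot\theta\rVert$ --- and your two proposed mechanisms (layerwise balancedness conservation under gradient flow for positively homogeneous activations, or the mean-field/Barron-space formulation) are exactly the tools the cited reference uses. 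One point to tighten if you write this up: the conservation-law route needs the layers to be balanced at initialization (or the imbalance must be absorbed into $C_0$); without that, the coefficients $\alpha_l(i)\beta_{l+1}(j)$ are not automatically controlled by $\Phi^{L/(L+1)}$, and the differential inequality does not close.
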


\begin{proof}[Proof sketch of Proposition~\ref{prop:poly-pn}]
    This proof assumes the gradient-flow evolution and an arbitrarily wide neural network based on the proofs in \cite{b28}.
    
    We first consider an FL setting. Let $\theta^{(i)}(t_k)$ denote the collection of all the parameters of client $i$'s neural network uploaded to the central server at the $k$-th communication round before FL aggregation. Let $\theta_j^{(i)}(t_k)$ denote the collection of $j$-th layer parameters of client $i$'s neural network uploaded to the central server at the $k$-th communication round before FL aggregation. Let $\tilde{\theta}_j(t_k)$ denote the collection $j$-th layer parameters of the global neural network at the $k$-th communication round after FL aggregation, i.e.
    \begin{align*}
        \tilde{\theta}_j(t_k)=&\mathrm{aggregation}(\theta_j^{(1)}(t_k),\dots,\theta_j^{(N)}(t_k))\\
        =&\phi(\theta_j^{(1)}(t_k),\dots,\theta_j^{(N)}(t_k))
    \end{align*}

    Define the risk functional of client $i$:
    \[
        \mathcal{R}^{(i)}(\cdot):=\int_{\mathcal{X}}\int_{\mathcal{Y}}\ell(f(x;\cdot),y)d_{\pi_i}\mathrm{Pr}(x,y)
    \]
    where $\mathcal{X}$ denotes the feature/input space and $\mathcal{Y}$ denotes the label/output space.

    We first consider the path-norm evolution during local update of client $i$. Let $E$ denote the time range of local update under gradient flow and $t_k=t_{k+1}+E,\forall k\le R$. By Theorem~\ref{thm:relu-pn-grow-weinan}, we have
    \begin{align*}
        \lVert\theta_j^{(i)}(t_{k+1})\rVert_{L^2(\pi^{j+1}\otimes\pi^{j})}\le&\lVert\tilde{\theta}_j(t_k)\rVert_{L^2(\pi^{j+1}\otimes\pi^{j})}\\
        &+\sqrt{\mathcal{R}^{(i)}\left(\tilde{\theta}(t_k)\right)}E^{1/2}
    \end{align*}
    Without loss of generality, consider the FedAvg aggregation scheme, i.e. for $1\le k\le R$,
    \begin{align*}
        \tilde{\theta}(t_k)=&\phi(\theta^{(1)}(t_k),\dots,\theta^{(N)}(t_k))\\
        =&\frac{1}{N}\sum_{i=1}^{N}\theta^{(i)}(t_k)
    \end{align*}
    Let $\Omega_{j}$ denote the index space of the $j$-th layer of a neural network, and $\theta_j^{(i)}(w_{j+1},w_{j},t_{k+1})$ denote the weight of the neural network given indices $w_{j+1},w_{j}$, then
    \begin{align*}
        &\lVert\tilde{\theta}_j(t_{k+1})\rVert_{L^2(\pi^{j+1}\otimes\pi^{j})}\\
        =&\left\lVert\frac{1}{N}\sum_{i=1}^{N}\theta_j^{(i)}(t_{k+1})\right\rVert_{L^2(\pi^{j+1}\otimes\pi^{j})}\\
        =&\int_{\Omega_{j+1}\times\Omega_{j}}\left(\frac{1}{N}\sum_{i=1}^{N}\theta_j^{(i)}(w_{j+1},w_{j},t_{k+1})\right)^2dw_{j+1}w_{j}\\
        \le&\frac{1}{N}\sum_{i=1}^{N}\lVert\theta_j^{(i)}(t_{k+1})\rVert_{L^2(\pi^{j+1}\otimes\pi^{j})}\\
        \le&\frac{1}{N}\sum_{i=1}^{N}\left(\lVert\tilde{\theta}_j(t_k)\rVert_{L^2(\pi^{j+1}\otimes\pi^{j})}+\sqrt{\mathcal{R}^{(i)}\left(\tilde{\theta}(t_k)\right)}E^{1/2}\right)\\
        \le&\lVert\tilde{\theta}_j(t_k)\rVert_{L^2(\pi^{j+1}\otimes\pi^{j})}+\max_{1\le i\le N}\sqrt{\mathcal{R}^{(i)}\left(\tilde{\theta}(t_k)\right)}E^{1/2}\\
        \le&\lVert\tilde{\theta}_j(t_k)\rVert_{L^2(\pi^{j+1}\otimes\pi^{j})}+\max_{1\le i\le N}\sqrt{\mathcal{R}^{(i)}\left(\tilde{\theta}(t_0)\right)}E^{1/2}\\
        \le&\lVert\tilde{\theta}_j(t_0)\rVert_{L^2(\pi^{j+1}\otimes\pi^{j})}+(k+1)\max_{1\le i\le N}\sqrt{\mathcal{R}^{(i)}\left(\tilde{\theta}(t_0)\right)}E^{1/2}
    \end{align*}

    Now we have an upper bound of the $j$-th layer parameters, we can then derive the upper bound of the path-norm. By Lemma 4.6 in \cite{b28},
    \begin{align*}
        \lVert f\rVert_{\mathrm{pnp}}\le\left(C'+k\max_{1\le i\le N}\sqrt{\mathcal{R}^{(i)}\left(\tilde{\theta}(t_0)\right)}E^{1/2}\right)^{L+1}
    \end{align*}
    where $C'$ is a constant and $C'\ge\lVert\tilde{\theta}_j(t_{0})\rVert_{L^2(\pi^{j+1}\otimes\pi^{j})}$ for all $1\le j\le L$.

    Note that the above result also applies for SCAFFOLD with a new risk functional of client $i$:
    \[
        \mathcal{R}^{(i)}(\cdot;f)=\int_{\mathbb{R}^{d_x+d_y}}(\ell(f(x;\cdot),y)+\langle\cdot,c_i\rangle)\mathbb{P}(dx\otimes dy)+M
    \]
    where $c_i$ denotes the client control variate and $M$ is a constant. We introduce $M$ to ensure that the risk functional is always non-negative. The gradient-flow dynamics does not depend on the choice of $M$.

    Next, we consider a decentralized learning setting. Here we take the multi-consensus stochastic variance reduced extragradient algorithm as an example \cite{decentralized_extragradient}. The local update analysis is similar to FL. As for the aggregation step in decentralized optimization, the communication step is typically written as the matrix multiplication
    \[
        \tilde{\theta}(t_k)=W\theta(t_k)
    \]
    where $\theta$ denotes the matrix which collects all clients' parameter vector $\theta^{(i)}$. We assume that
    \begin{itemize}
        \item $W_{ij}\not=0$ if client $i$ and $j$ can exchange information;
        \item $W$ is a symmetric matrix;
        \item $\mathbf{0}\preceq W\preceq I,W\mathbf{1}=\mathbf{1},\mathrm{null}(I-W)=\mathrm{span}(\mathbf{1})$.
    \end{itemize}
    
    Let $\theta^{(i)}(t_k)$ denote the collection of all the parameters of client $i$'s neural network uploaded to the central server at the $k$-th communication round before decentralized communication. Let $\theta_j^{(i)}(t_k)$ denote the collection of $j$-th layer parameters of client $i$'s neural network uploaded to the central server at the $k$-th communication round before decentralized communication. Let $\tilde{\theta}^{(i)}_j(t_k)$ denote the collection $j$-th layer parameters of client $i$'s neural network at the $k$-th communication round after the decentralized communication.
    
    By Lemma 2 in \cite{decentralizedAggregation} and Lemma 2.1 in \cite{decentralized_extragradient}, we obtain a bound on the mixing rate of parameters,
    \begin{align*}
        &\left\lVert\tilde{\theta}_j^{(i)}(t_k)-\frac{1}{N}\sum_{I=1}^{N}\theta_j^{(I)}(t_k)\right\rVert_{L^2(\pi^{j+1}\otimes\pi^{j})}\\
        \le&\lambda_2(W)\left\lVert\theta_j^{(i)}(t_k)-\frac{1}{N}\sum_{I=1}^{N}\theta_j^{(I)}(t_k)\right\rVert_{L^2(\pi^{j+1}\otimes\pi^{j})}
    \end{align*}
    where $\lambda_2(W)$ denotes the second largest eigenvalue of $W$. Then
    \begin{align*}
        &\lVert\tilde{\theta}_j^{(i)}(t_{k+1})\rVert_{L^2(\pi^{j+1}\otimes\pi^{j})}\\
        \le&(1+2\lambda_2(W))\lVert\tilde{\theta}_j(t_k)\rVert_{L^2(\pi^{j+1}\otimes\pi^{j})}\\
        &(1+\lambda_2(W))\max_{1\le i\le N}\sqrt{\mathcal{R}^{(i)}\left(\tilde{\theta}(t_k)\right)}E^{1/2}\\
        \le&(1+2\lambda_2(W))\lVert\tilde{\theta}_j(t_k)\rVert_{L^2(\pi^{j+1}\otimes\pi^{j})}\\
        &(1+\lambda_2(W))\max_{1\le i\le N}\sqrt{\mathcal{R}^{(i)}\left(\tilde{\theta}(t_0)\right)}E^{1/2}\\
        \le&(1+2\lambda_2(W))^{k+1}\lVert\tilde{\theta}_j(t_0)\rVert_{L^2(\pi^{j+1}\otimes\pi^{j})}\\
        &+\frac{1+\lambda_2(W)}{2\lambda_2(W)}((1+2\lambda_2(W))^{k+1}-1)\max_{1\le i\le N}\sqrt{\mathcal{R}^{(i)}\left(\tilde{\theta}(t_0)\right)}E^{1/2}
    \end{align*}
    again, by Lemma 4.6 in \cite{b28}, we have
    \begin{align*}
        \lVert f\rVert_{\mathrm{pnp}}=\mathcal{O}\left(e^{C'' k}\max_{1\le i\le N}\sqrt{\mathcal{R}^{(i)}\left(\tilde{\theta}(t_0)\right)}E^{1/2}\right)^{L+1}
    \end{align*}
\end{proof}

\end{document}